\pgfplotsset{compat=newest}
\definecolor{darkgreen}{RGB}{21,140,21}
\definecolor{darkblue}{RGB}{21,21,140}
\definecolor{sun}{RGB}{234,171,0}
\colorlet{shadecolor}{black!5}
\definecolor{pastelMagenta}{HTML}{FF48CF}
\definecolor{pastelPurple}{HTML}{8770FE}
\definecolor{pastelBlue}{HTML}{1BA1EA}
\definecolor{pastelSeaGreen}{HTML}{14B57F}
\definecolor{pastelGreen}{HTML}{3EAA0D}
\definecolor{pastelOrange}{HTML}{F97306}
\definecolor{pastelRed}{HTML}{F5615C}
\DeclareMathAlphabet{\mathmybb}{U}{bbold}{m}{n}
\newcommand{\dft}[5]{#1\overset{#2}{\leftarrow}#3\overset{#4}{\rightarrow}#5}
\DeclareMathOperator{\gdet}{gdet}
\DeclareMathOperator{\wf}{WF}
\DeclareMathOperator{\Lip}{Lip}
\DeclareMathOperator*{\supp}{supp}
\DeclareMathOperator*{\vspan}{span}
\DeclareMathOperator*{\polylog}{\mathrm{polylog}}
\DeclareMathOperator*{\effrank}{\tilde{r}}
\newsavebox{\@brx}
\newcommand{\llangle}[1][]{\savebox{\@brx}{\(\m@th{#1\langle}\)}%
  \mathopen{\copy\@brx\kern-0.5\wd\@brx\usebox{\@brx}}}
\newcommand{\rrangle}[1][]{\savebox{\@brx}{\(\m@th{#1\rangle}\)}%
  \mathclose{\copy\@brx\kern-0.5\wd\@brx\usebox{\@brx}}}
\newtheorem{theorem}{Theorem}
\newtheorem{lemma}{Lemma}
\newtheorem{proposition}{Proposition}
\crefname{conjecture}{Conjecture}{Conjectures}
\crefname{claim}{Claim}{Claims}
\theoremstyle{definition}
\newtheorem{definition}{Definition}
\newtheorem{assumption}{Assumption}
\crefname{assumption}{Assumption}{Assumptions}
\newtheorem{example}{Example}
\newtheorem{remark}{Remark}
\newtheoremstyle{takeawaystyle}
      {\topsep}
      {\topsep}
      {\itshape}
      {}
      {\bfseries}
      {:}
      { }
      {\Ordinalstringnum{#2} \thmname{#1}\thmnote{. #3}}
\theoremstyle{takeawaystyle}
\newmdtheoremenv[innertopmargin=0pt,leftmargin=2cm,rightmargin=2cm,innerleftmargin=0.5cm,innerrightmargin=0.5cm]{takeaway}{Takeaway}
\begin{document}

\title{Rates and architectures for learning \\ geometrically non-trivial operators} 

\author[a]{T.~Mitchell~Roddenberry}
\author[b]{Leo Tzou}
\author[c]{Ivan Dokmani\'{c}}
\author[d]{\authorcr Maarten V. de Hoop}
\author[a]{Richard G. Baraniuk}

\affil[a]{Department of Electrical and Computer Engineering, Rice University, Houston, TX, USA}
\affil[b]{School of Mathematics and Statistics, University of Melbourne, Melbourne, Australia}
\affil[c]{Department of Mathematics and Computer Science, University of Basel, Basel, Switzerland}
\affil[d]{Department of Computational Mathematics and Operations Research, Rice University, Houston, TX, USA}

\date{\vspace{-5ex}}

\maketitle

\begin{abstract}
Deep learning methods have proven capable of recovering operators between high-dimensional spaces, such as solution maps of PDEs and similar objects in mathematical physics, from very few training samples.
This phenomenon of data-efficiency has been proven for certain classes of elliptic operators with simple geometry, \textit{i.e.}, operators that do not change the domain of the function or propagate singularities.
However, scientific machine learning is commonly used for problems that \emph{do} involve the propagation of singularities in \textit{a priori} unknown ways, such as waves, advection, and fluid dynamics.
In light of this, we expand the learning theory to include double fibration transforms--geometric integral operators that include generalized Radon and geodesic ray transforms.
We prove that this class of operators does not suffer from the curse of dimensionality: the error decays superalgebraically, that is, faster than any fixed power of the reciprocal of the number of training samples.
Furthermore, we investigate architectures that explicitly encode the geometry of these transforms, demonstrating that an architecture reminiscent of cross-attention based on levelset methods yields a parameterization that is universal, stable, and learns double fibration transforms from very few training examples.
Our results contribute to a rapidly-growing line of theoretical work on learning operators for scientific machine learning.
\end{abstract}

\medskip

\noindent Learning integral kernel operators, often stemming from partial differential equations (PDEs), is a staple of scientific machine learning~\cite{yu2024learning,raissi2020hidden,karniadakis2021physics}.
It is a key component of neural operators that approximate nonlinear maps between function spaces~\cite{li2020fno,kovachki2021universal,li2023fourier}.
Recent advances promise to characterize phenomena for which mathematical descriptions are currently out of reach, and to greatly accelerate numerical solutions of partial differential equations and inverse problems~\cite{kothari2020learning,yu2024learning,wang2021learning,de2019deep,fan2020solving,E2018deepRitz,Wang2021DyAd,Karthik2021physics,cuomo2022scientific,schaeffer2017learning}. 
The inputs and outputs of these operators are high or infinite-dimensional spatio(temporal) fields~\cite{boulle2023,bhattacharya2020model,rao2022discovering,long2019pde}, so
learning is in general subject to the curse of dimensionality; learning a $d$-variate scalar Lipschitz function to precision $\epsilon$ requires $\Omega(\epsilon^{-d})$ samples~\cite[Theorem~15]{luxburg2004}. 
It is thus important to establish that classes of interest are not cursed. 
Indeed, various operators arising from physics can be efficiently learned using relatively few training samples~\cite{lu2021learning,boulle2023,chen2023deep}.

Recent work explains this data-efficiency phenomenon for an important class of operators, namely source-to-solution maps for elliptic PDEs~\cite{boulle2023}.
Although the main result in~\cite{boulle2023} is established using carefully constructed example data tailored to an \textit{a priori} known multiscale structure, it sheds light on how deep learning methods whose implicit bias favors such operators, such as GreenLearning~\cite{boulle2022data}, can achieve low testing error from very few training samples.

However, this does not resolve the important question of learning operators with \emph{non-trivial geometry}, because the solution operators for elliptic PDEs are pseudolocal.
Given that methods of scientific machine learning are being applied to problems arising in electrostatics~\cite{fan2020solving}, groundwater hydraulics~\cite{prabhat_nature_2019}, and tomography~\cite{monard2014,FGGN,khorashadizadeh2024glimpse},
we are motivated to establish learning guarantees for operators with non-trivial geometry of this type. 
We study double fibration transforms in particular, and prove that learning this class of operators does not suffer from the curse of dimensionality.
Our sampling rates use training examples drawn from generic random fields, with no hierarchical structure or ordering, and thus meshes naturally with standard supervised learning theory. 
The key property underlying this result is the sparse structure of double fibrations when represented using time-frequency atoms, allowing us to leverage ideas from compressive sensing to establish upper bounds on learning rates.

Following this, we develop a parameterization of these operators that represents the geometric relation \emph{implicitly}~(\cref{defn:levelset-integral-kernel}).
We prove that this parameterization asymptotically represents the appropriate class of operators~(\cref{lemma:gauss-approximation}), can be factorized for computational efficiency without any loss of expressivity~(\cref{lemma:universal}), and is robust to discretization of the domain~(\cref{lemma:discrete-kernel}).
Even when the dimension of the geometric relation is unknown, we demonstrate that it can be estimated from the learned parameterization by examining the Jacobian of the learned map over incident points.
We show that this architecture can approximate double fibration transforms well from few training samples.
Additionally, through an illustrative example from Riemannian geometry, we show that it is amenable to \textit{post hoc} analysis to determine the metric underlying a geodesic ray transform.

\section{Linear Operator Learning}

Given two potentially infinite-dimensional function spaces $U,V$, the goal of \emph{linear operator learning}~\cite{boulle2023,de2023convergence,boulle2024operator,boulle2024mathematical,boulle2023learning,boulle2022learning,tabaghi2019learning} is to estimate a linear map $R:U\to V$ given a dataset $\{S_1 u_j,S_2 Ru_j\}_{j=1}^J$, where each $u_j$ is an element of $U$, and $S_1,S_2$ model sampling/discretization (when necessary).

The structure of the dataset and operator motivate learning algorithms and theoretical sampling guarantees.
For instance, Boull{\'e} et al. show that $R$ can be recovered up to error $\epsilon>0$ from $J=\mathrm{polylog}(1/\epsilon)$ if it is the Green's function of an elliptic PDE~\cite{boulle2023}, provided that the  $u_j$ are hierarchically structured in a way adapted to the regularity of the matrix representation of these operators.
Similarly, if $U,V$ are separable Hilbert spaces and $R$ is diagonalized by known orthonormal bases for $U,V$, then sampling rates when learning from instances of random noise can be derived~\cite{de2023convergence}.
Related results on PAC-learnability follow from assumptions on the singular values of the operator~\cite{tabaghi2019learning}.

Beyond linear operator learning, nonlinear problems have often been approached using deep learning methods~\cite{lecun2015deep,chen2023deep,lanthaler2022error,gin2021deepgreen,lu2021learning,bubba2020deep,de2019deep,adcock2020deep,fan2020solving,Raissi2019PINN,Sirignano2018DGM}, perhaps most famously using Fourier Neural Operators (FNOs)~\cite{li2020fno,kovachki2021universal,pathak2022fourcastnet,li2023fourier}.
FNOs learn function-to-function maps parameterized by interleaved truncated Fourier transforms and pointwise nonlinearities, yielding a translation-equivariant nonlinear operator.

Many approaches to operator learning are undergirded, either implicitly or explicitly, by a known geometric structure associated with the problem.
In the case of learning solution operators for elliptic PDEs, the high-frequency features of the pointwise evaluation functional $[Ru](x)$ (asymptotically) depend only on the values of $u$ in a small neighborhood of $x$.
Although the abstract framework put forth in~\cite{de2023convergence} does not presume a geometric domain over which the function spaces are defined, it still hypothesizes that a simple relationship is known between representations in orthonormal bases, which can be interpreted as an abstract form of the geometric assumption on which the other methods rest.

\section{Double Fibration Transforms}

We consider operators that model integrals of geometric relations between domains.
Let $Y,X$ be compact smooth manifolds referred to as the \emph{measurement domain} and the \emph{target domain}, respectively.
The spaces of distributions\footnote{For the reader unfamiliar with the theory of distributions, think of them as scalar-valued functions, but with allowance for singularities such as Dirac delta functions.} on $Y,X$ are written $\mathcal{D}'(Y),\mathcal{D}'(X)$.
Let the \textit{incidence relation} $Z\subset Y\times X$ be compact and equipped with a smooth nonvanishing measure $\mu$, and denote the respective projection maps by $p:Z\to Y$ and $q:Z\to X$.
The relationship of $Z$ to $Y,X$ via these maps is indicated by the diagram $\dft{Y}{p}{Z}{q}{X}$.

This defines an operator $R:\mathcal{D}'(X)\to\mathcal{D}'(Y)$ that acts as follows.
For each $y\in Y$, put $G_y=(q\circ p^{-1})(y)\subset X$ for each $y\in Y$, and define:
\begin{equation}\label{eq:fiber-integral}
    Ru(y) = \int u(x)d\mu_y(x),
\end{equation}
where $\mu_y$ is a measure on $X$ supported on $G_y$, determined by the measure $\mu$ on $Z$.
Succinctly, $Ru$ computes the pushforward by $p$ of the pullback of $u$ by $q$:
\begin{equation*}
    Ru(y) = p_{*}(u\circ q)(y).
\end{equation*}
\begin{example}[Diffeomorphism]\label{example:diffeomorphism}
    Let $X$ and $Y$ be smooth manifolds with a diffeomorphism $f:Y\to X$, and define the operator $R:\mathcal{D}'(X)\to\mathcal{D}'(Y)$ as the \emph{pullback of the diffeomorphism} $f$, that is, $Ru(y)=(u\circ f)(y)$.
    Then, $G_y$ is the singleton set $\{f(y)\}$ for all $y\in Y$.
    A trivial instance is when $Z$ is given by the diagonal $\Delta(X)\subset X\times X$.
    In this case, $R$ is simply a multiplier.
\end{example}

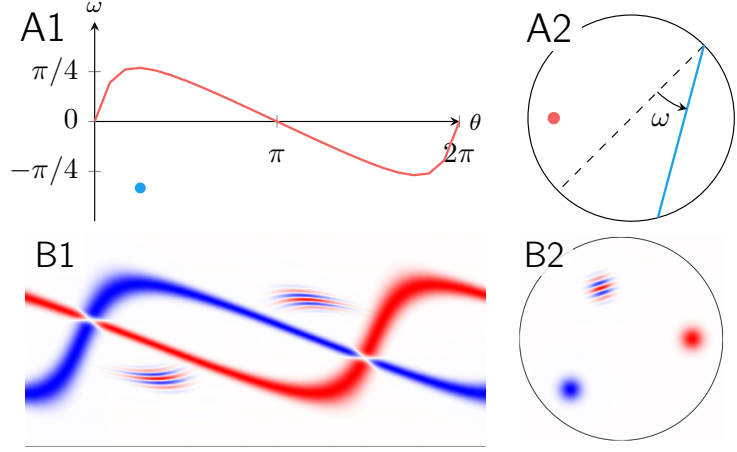
\begin{figure}
\centering
\resizebox{0.6\linewidth}{!}{\begin{tikzpicture}

    \begin{scope}
    \draw (0, 0) circle [radius=1.25cm];

    \draw[dashed] (45:1.25) -- (225:1.25);
    \draw[pastelBlue,thick] (45:1.25) -- (285:1.25);
    \draw[draw=none] (285:1.25) coordinate (A) -- (45:1.25) coordinate (B) -- (225:1.25) coordinate (C)
    pic ["$\omega$",angle eccentricity=1.3,draw,-stealth,angle radius=0.8cm] {angle = C--B--A};

    \fill [pastelRed] (180:0.9375) circle (0.075);

    \node [fill=white,anchor=north west] at (-1.4,1.4) {\Large\textsf{A2}};
    \end{scope}

    \begin{scope}[xshift=-6.5cm, yshift=-1.25cm]
    \begin{axis}[
    xmin=0, xmax=360,
    ymin=-90, ymax=90,
    width=6cm, height=4cm,
    axis x line=middle,
    axis y line=left,
    xtick={180,360},
    xticklabels={$\pi$,$2\pi$},
    ytick={-45,0,45},
    yticklabels={$-\pi/4$,$0$,$\pi/4$},
    typeset ticklabels with strut,
    clip=false
    ]
    
    \addplot[pastelRed, domain=0:360, thick] {asin(3*sin(x)/(4*sqrt(1.5626 - 3*cos(x)/2)))};
    \node at (axis cs:45,-60) [circle, scale=0.4, fill=pastelBlue] {};

    \node at (axis cs:360,0) [anchor=west] {\footnotesize$\theta$};
    \node at (axis cs:0,90) [anchor=south] {\footnotesize$\omega$};

    \node [fill=white,anchor=north east] at (rel axis cs: -0.05,1.1) {\Large\textsf{A1}};
    
    \end{axis}
    \end{scope}
\end{tikzpicture}}
\resizebox{0.6\linewidth}{!}{\def\graphicdir{figs/data/fig_packets}

\begin{tikzpicture}
    \begin{groupplot}[
    group style={
    group size=2 by 1,
    group name=myGroup,
    horizontal sep=0.4cm,
    },
    tick style={draw=none},
    xticklabel=\empty,
    yticklabel=\empty,
    width=4.5cm,
    height=4.5cm,
    xmin=-1, xmax=1,
    ymin=-1, ymax=1,
    clip mode=individual
    ]


    \nextgroupplot[width=7.875cm,xmin=-1.9,xmax=1.9,ymin=-0.9,ymax=0.9]
    \addplot graphics[xmin=-2,xmax=2,ymin=-1,ymax=1] 
    {\graphicdir/Rf.png};
    \node [fill=white, fill opacity=0.8, text opacity=1, anchor=north west] at (rel axis cs: 0,1) {\Large\textsf{B1}};

    \nextgroupplot[axis line style={draw=none}]
    \addplot graphics[xmin=-1,xmax=1,ymin=-1,ymax=1] 
    {\graphicdir/f.png};
    \node [fill=white, fill opacity=0.8, text opacity=1, anchor=north west] at (rel axis cs: 0,1) {\Large\textsf{B2}};

    \end{groupplot}
\end{tikzpicture}}
\caption{
\textbf{(A)} Geometry of the Radon transform~(\cref{example:radon}).
\textit{(A1)} Measurement domain $Y$, with a point $y$ (blue) and a fiber $H_x$ (red).
\textit{(A2)} Target domain $X$, with the fiber $G_y$ corresponding to $y$ (blue) and the point $x$ defining the fiber $H_x$ (red).
\textbf{(B)} The Radon transform of Gabor atoms.
\textit{(B1)} Radon transform of
a function $u(x)$ given by the sum of three Gabor atoms, two of which are unmodulated \textit{(B2)}.
That is, $\hat\xi=0$ for an unmodulated atom $g_{\hat x,\hat\xi}$.
Each unmodulated Gabor atom $g_{\hat x,0}$ is mapped to a neighborhood of the whole fiber $H_{\hat x}$, while the modulated Gabor atom is only mapped to a small region dictated by the conormal bundle of $Z$.
}
\label{fig:ray-fibration}
\end{figure}

\begin{example}[Radon Transform]\label{example:radon}
    Let $X$ be the open unit disc in $\mathbb{R}^2$ and $Y=\partial X\times (-\pi/2,\pi/2)$.
    For $(\theta,\omega)\in Y$, let $G_y$ consist of all points $x$ such that
    $x$ lies on the line segment with an endpoint at $\theta$ and angle $\omega$ relative to the inward-pointing normal vector.
    The \emph{Radon transform} is the operator $R:\mathcal{D}'(X)\to\mathcal{D}'(Y)$ such that
    \begin{equation*}
        Ru(y) = \int_{G_y}u(x)dx.
    \end{equation*}
    The geometric relationship between $Y$ and $X$ for the Radon transform is illustrated in \cref{fig:ray-fibration}~(A).
\end{example}

Denote the dimensions of $Y,X$ by $N=\dim(Y)$ and $n=\dim(X)$, respectively, and assume $N\geq n$.
Suppose that the incidence relation $Z\subset Y\times X$ is a smooth submanifold of codimension $n''\leq n$, with the additional assumption that the projection maps $p,q$ are submersions.
Then, we call the resulting operator a \emph{double fibration transform}, 
since this condition implies that $p,q$ are fibration~\cite{GuilleminSternberg,Guillemin1985,mazzucchelli2023}.
The set $G_y$ is called the \emph{fiber} of $y$.
It is also useful to define the fiber of $x\in X$ as $H_x=(p\circ q^{-1})(x)$.
The codimension of the incidence manifold $Z$ dictates the codimension of the fibers $G_y$ and $H_x$, and is referred to as the codimension of the transform.
Operators of this type are considered in integral geometry~\cite{GuilleminSternberg,Guillemin1985}, with applications in a variety of imaging modalities, including radar~\cite{StefanovUhlmann_SAR}, seismic~\cite{FGGN,deHoopStolk}, and microscopy~\cite{quinto2008local}.
When $R$ describes an operator that computes integrals over submanifolds of some fixed codimension $n''$, while not necessarily being a double fibration transform, we refer to it generically as a \emph{geometric integral operator}.

\begin{example}[Euclidean Ray Transform]\label{example:ray}
    Let $X$ be the open unit disc in $\mathbb{R}^n$ for some $n\geq 2$ and $Y=\partial_{in}SX$ be the inward-pointing sphere bundle, that is, the set of all inward-pointing unit rays with basepoint on the boundary of $X$.
    For each $y\in Y$, let $G_y$ consist of all points $x\in X$ that lie along the extension of the ray $y$.
    The Euclidean Ray transform is defined via integration over the fibers as in \cref{example:radon}: $Ru(y)=\int_{G_y}u(x)dx$.
\end{example}
\begin{example}[Spherical Mean Transform]\label{example:circles}
    Let $X=\mathbb{R}^n$ and $Y\subset\mathbb{R}^n\times\mathbb{R}^{>0}$.
    Define the incidence submanifold $Z\subset Y\times X$ as
    \begin{equation*}
        Z = \{(y,r;x):\|y-x\|^2=r^2\}.
    \end{equation*}
    Coupled with a smooth measure $\mu$, this defines a double fibration transform such that for each $(y,r)\in Y$, the fiber $G_y$ is the sphere centered at $y$ with radius $r$.
\end{example}
We remark that the Euclidean ray transform is a simple example of the more general \emph{geodesic ray transform}~\cite{HolmanUhlmann,monard2014} in a Riemannian metric, which we consider later in \eqref{eq:geodesic-ray-transform}.
Similarly, the spherical mean transform is a simple case of a generalized Radon transform~\cite{de2009seismic}, which can also be defined in non-Euclidean spaces.

\section{Data-Efficient Learning}

Double fibration transforms map $L^2(X)$ to $L^2(Y)$.
We now consider if (approximations) of these operators are \emph{learnable}.
In particular, we ask how many examples $\{u_j,Ru_j\}_{j=1}^J$ are necessary to form a close estimate of $R$.

\subsection{Phase Space Transformations}

The submanifold $Z\subset Y\times X$ is a relation between points in $Y$ and points in $X$.
The geometry of $Z$ dictates how certain properties of functions $u(x)$ are reflected in $Ru(y)$.
We probe these relationship using time-frequency atoms, defined as follows.

Assume that $Y$ is a Euclidean domain, that is, a compact subset of $\mathbb{R}^{N}$.
Then, for a window function $h\in C_0^{\infty}(\mathbb{R}^{N})$ that can be written as the tensor product of window functions on $\mathbb{R}$, we define the Gabor atoms
\begin{equation*}
    h_{\hat y,\hat \eta}(y) =
    e^{i2\pi\langle \hat\eta, y-\hat y\rangle}h(y-\hat y)
\end{equation*}
indexed by $(\hat y,\hat\eta)\in T^*Y$.
A similar construction is used for a separable window function $g\in C_0^{\infty}(\mathbb{R}^n)$ to define Gabor atoms $g_{\hat x,\hat\xi}(x)$ on $X$ indexed by $(\hat x,\hat\xi)\in T^*Y$.
We assume that the windows $h,g$ are chosen such that each has ``small'' support.
From the geometry of the double fibration transform, then, it is clear that $\langle h_{\hat y,\hat\eta}, Rg_{\hat x,\hat\xi}\rangle\neq 0$ only if $\hat x$ is contained in a sufficiently small neighborhood of $G_{\hat y}$.
This is pictured for the Radon transform in \cref{fig:ray-fibration}~(B).

The relationship between the support of $Ru$ and that of $u$ is obvious given the nature of a double fibration transform as the pushforward of a pullback following the diagram 
$\dft{Y}{p}{Z}{q}{X}$.
The geometry of the incidence submanifold also dictates how singularities in $u$ are propagated by $R$: that is, how the wavefront set~\cite{Hormanderacta} $\wf(Ru)$ relates to $\wf(u)$.
At a high level, the wavefront set $\wf(u)$ of a distribution $u\in\mathcal{D}'(X)$ consists of points $(x,\xi)\in T^*X$ such that in an arbitrarily small neighborhood of $x$, $u$ appears ``non-smooth'' in the direction $\xi$, as signaled by slow decay of the Fourier transform.
A typical example is a function on $\mathbb{R}^2$ that is smooth away from a $C^2$ curve: the wavefront set consists of $(x,\xi)$ where $x$ lies on the curve and $\xi$ is normal to the curve at $x$.
A similar definition holds for defining the wavefront set for distributions on $Y$.
We refer the reader to \cite{Hormanderacta} for a more precise definition of the wavefront set.

Since $\wf(Ru)\subset T^{*}Y$ and $\wf(u)\subset T^{*}X$, any relation between the two is a subset of $T^{*}Y\times T^{*}X$.
Indeed, this relation is given by the conormal bundle\footnote{The conormal bundle of $Z$ consists of points $(y,x)\in Z$ paired with covectors $(\eta,\xi)$ perpendicular to $Z$ at $(y,x)$.} of $Z$.
\begin{proposition}[{\cite[Theorem~2.2]{mazzucchelli2023}}]\label{prop:of-singularities}
    Let $R$ be a double fibration transform for an incidence submanifold $Z$ with smooth nonvanishing measure $\mu$.
    Then, for any $u\in\mathcal{D}'(X)$ and $(y,\eta)\in T^{*}Y$,
    it holds that $(y,\eta)\in\wf(Ru)$ if and only if there is an $(x,\xi)\in\wf(u)$ such that
    \begin{equation}\label{eq:conormal-transpose}
        (y,-\eta;x,\xi)\in N^{*}Z,
    \end{equation}
    where $N^{*}Z$ denotes the conormal bundle of $Z\subset Y\times X$.
\end{proposition}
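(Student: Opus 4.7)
The plan is to recognize $R$ as a Fourier integral operator (FIO) and apply Hörmander's wavefront-set calculus.

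First, I identify the Schwartz kernel $K_R\in\mathcal{D}'(Y\times X)$ of $R$. The pushforward-of-pullback description makes $K_R$ the pushforward of $\mu$ under the inclusion $\iota:Z\hookrightarrow Y\times X$. Because $\mu$ is smooth and nowhere-vanishing on the smooth submanifold $Z$, $K_R$ is a conormal distribution with $\wf(K_R) = N^*Z\setminus 0$ and nonvanishing principal symbol. Thus $R$ is an elliptic FIO with canonical relation $C = (N^*Z)'$, where the prime denotes the sign flip on $\eta$ built into \eqref{eq:conormal-transpose}.

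Hörmander's composition theorem applied to the pairing of $K_R$ against a test distribution then yields the forward inclusion $\wf(Ru) \subset C \circ \wf(u)$. Here the submersion hypothesis on $q:Z\to X$ is essential to rule out the boundary term $\{(y,\eta):(y,-\eta;x,0)\in N^*Z\text{ for some }x\}$: any covector in $N^*Z$ with vanishing $X$-component would contradict surjectivity of $dq$ at the corresponding point of $Z$, so this set is empty. This establishes the ``only if'' direction.

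For the ``if'' direction, suppose $(x,\xi)\in\wf(u)$ and $(y,-\eta;x,\xi)\in N^*Z$. Microlocalize $u$ with a pseudodifferential cutoff $\chi$ that is elliptic at $(x,\xi)$ and microsupported in a small conic neighborhood, so $\chi u$ retains the singularity at $(x,\xi)$ while $(1-\chi)u$ is microlocally smooth there. The ellipticity of $R$ (from nonvanishing $\mu$), together with the submersion condition on $p$, guarantees that the FIO transports this isolated singularity to a singularity of $R(\chi u)$ at $(y,\eta)$: no cancellation can occur because the clean-intersection structure induced by the double fibration makes $C$ locally well-behaved over a conic neighborhood of $(x,\xi)$. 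Since $(y,\eta)\in\wf(R(\chi u))\subset\wf(Ru)\cup\wf(R(1-\chi)u)$ and the second set is smooth near $(y,\eta)$, we conclude $(y,\eta)\in\wf(Ru)$. The main obstacle is precisely this reverse inclusion: generic FIOs satisfy only the forward containment, and upgrading to a biconditional requires both the ellipticity granted by nonvanishing $\mu$ and the clean-intersection properties granted by the two submersions $p,q$.
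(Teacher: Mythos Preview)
The paper does not give its own proof of this proposition; it is quoted directly from \cite[Theorem~2.2]{mazzucchelli2023}, so there is no in-paper argument to compare against.

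Your ``only if'' direction is correct and standard: the Schwartz kernel of $R$ is the conormal distribution $\iota_*\mu$ with $\wf(K_R)=N^*Z\setminus 0$, and the submersion hypothesis on $q$ eliminates the boundary term in H\"ormander's pushforward estimate, giving $\wf(Ru)\subset C\circ\wf(u)$.

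The ``if'' direction, however, has a genuine gap. In the decomposition $u=\chi u+(1-\chi)u$ you assert that $R((1-\chi)u)$ is microlocally smooth at $(y,\eta)$, but this is not justified: $(1-\chi)u$ retains every singularity of $u$ away from the chosen conic neighborhood of $(x,\xi)$, and any other $(x',\xi')\in\wf(u)$ with $(y,-\eta;x',\xi')\in N^*Z$ can, by the very forward inclusion you just proved, make $R((1-\chi)u)$ singular at $(y,\eta)$. The appeal to ``clean-intersection structure'' does not rule this out---cleanliness controls the symbol calculus for compositions, not cancellation between distinct fibres of $P:N^*Z'\to T^*Y$. In fact the biconditional fails without an injectivity hypothesis on $P$: on $Y=X=S^1$ the operator $Ru(y)=u(y)+u(y+\pi)$ is a double fibration transform (two disjoint diagonals in $S^1\times S^1$, nonvanishing measure, both projections local diffeomorphisms), yet every odd-frequency distribution lies in its kernel, so one can take $u$ singular with $Ru\equiv 0$, whence $\wf(Ru)=\emptyset$ while $\wf(u)\neq\emptyset$. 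Your argument for the reverse inclusion therefore cannot be completed from ellipticity and the double-fibration axioms alone; something like the Bolker condition introduced immediately after the proposition is needed, and even then the correct statement is $\wf(u)=(Q\circ P^{-1})(\wf(Ru))$ rather than the pointwise ``if'' you attempt.
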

Denoting the set of all $(y,\eta;x,\xi)$ that satisfy \eqref{eq:conormal-transpose} by $N^*Z'$, \cref{prop:of-singularities} indicates that $\wf(u)\subseteq(Q\circ P^{-1})(\wf(Ru))$, where $P:N^*Z'\to T^*Y$ and $Q:N^*Z'\to T^*X$ are the projection maps.
That is, we ``lift'' the diagram describing a double fibration transform to the cotangent bundle:
$\dft{T^*Y}{P}{N^*Z'}{Q}{T^*X}$.
When $Q\circ P^{-1}$ is injective, the upper bound of \cref{prop:of-singularities} is exact.
\begin{definition}[{Bolker Condition~\cite{GuilleminSternberg,Guillemin1985,mazzucchelli2023}}]\label{defn:bolker}
    A double fibration transform with incidence submanifold $Z$ is said to satisfy the \emph{Bolker condition} if $P:N^*Z'\to T^*Y$ is an injective immersion.
\end{definition}
An interpretation of the Bolker condition is that a reconstruction of $u$ from measurements $Ru$ will not create any new singularities~\cite{kunstmann2023seismic}.
In particular, for a double fibration transform $R$ satisfying the Bolker condition, the normal operator $R^*R$ is a pseudodifferential operator, and thus has the relationship $\wf(R^*Ru)\subseteq\wf(u)$.

The cotangent bundles $T^*Y$ and $T^*X$ are often referred to as \emph{phase space}.
The conormal bundle of the incidence submanifold is thus a relation in phase space, signified by the diagram $\dft{T^*Y}{P}{N^*Z'}{Q}{T^*X}$.
Under the Bolker condition, this relation is be described by a map $\chi=Q\circ P^{-1}$, which we call the \emph{Bolker map}.
The Bolker map is positive homogeneous in the phase variable, \textit{i.e.}, if $\chi(y,\eta)=(x,\xi)$, then for any $s>0$, $\chi(y,s\eta)=(x,s\xi)$.

Although the Bolker map describes a relationship between the \emph{cotangent bundles}, it can be used to understand how $R$ acts on \emph{Gabor atoms} based on their localization in phase space.
For the window function $h$, it is of course impossible for its Fourier transform to be compactly supported, by the Paley-Weiner theorem.
However, due to the smoothness of $h$, its Fourier transform is concentrated in a neighborhood of the origin: thus, we say that $h$ has effective support in some $U_{0,0}\subset T^*\mathbb{R}^N$, where $U$ is a bounded neighborhood of $(0,0)$.
It follows, then, that $h_{\hat y,\hat\eta}$ has effective support in $U_{\hat y,\hat\eta}$, which is simply the translation of $U_{0,0}$ to be centered at $(\hat y,\hat\eta)$.
We denote the effective phase space support of Gabor atoms $g_{\hat x,\hat\xi}$ by $V_{\hat x,\hat\xi}\in T^*\mathbb{R}^n$, following similar logic.

Heuristically, we expect $|\langle h_{\hat y,\hat\eta}, Rg_{\hat x,\hat\xi}\rangle|$ to be large when $V_{\hat x,\hat\xi}$ overlaps with $\chi(U_{\hat y,\hat\eta})$, and to be small otherwise.
That is, the operator $R$ is \emph{well-organized} when represented using suitable families of Gabor atoms, in the sense that it concentrates near pairs $(\hat y,\hat\eta;\hat x,\hat\xi)\in N^*Z'$, in accordance with \cref{prop:of-singularities}.
\begin{remark}
    We establish the claim that $R$ is well-organized when represented using Gabor atoms more precisely in the SI.
    To do so, we leverage the fact that $R$ is a Fourier Integral Operator (FIO) with a symbol independent of the frequency variable, and is thus in the class $(\rho,\delta)$ for any $\rho\geq 0$ and $\delta\geq 0$, making it amenable to methods of time-frequency analysis~\cite{Hormanderacta}.
\end{remark}
Thinking of this representation of $R$ as being indexed by $T^*Y \times T^*X$, this means that each $(y,\eta)\in T^*Y$ has very few corresponding large elements in $T^*X$.
This concept is illustrated in \cref{fig:ray-fibration}~(B), where the Radon transform is shown to map Gabor atoms with nonzero modulation term $\hat\xi$ to objects with similar time-frequency localization.

\subsection{Compressibility}

We now describe the number of input-output examples needed to learn a double fibration transform on $L^2(X)$.
The following theorem establishes a data-efficiency result, leveraging the fact that $R$ is well-organized when represented using Gabor atoms, thus making it amenable to methods of compressive sensing~\cite{candes2006stable,baraniuk2007compressive}:
\begin{theorem}\label{theorem:data-efficiency}
    Let $R:\mathcal{D}'(X)\to\mathcal{D}'(Y)$ be a double fibration transform satisfying the Bolker condition, and let $\epsilon>0$ be sufficiently small.
    Then, for all $r>1$, there exists a randomized algorithm yielding an estimate $\widehat{R}$ of $R$ from $J\approx ({C_r}/{\epsilon})^{1/r}\log(1/\epsilon)$ training samples $\{u_j,Ru_j\}_{j=1}^J$ such that
    \begin{equation*}
        \sup_{(\hat y,\hat\eta)\in T^*Y}\big|\langle h_{\hat y,\hat\eta}, (\widehat{R}-R)u\rangle\big|<\epsilon\|u\|_{L^2}
    \end{equation*}
    for all $u\in L^2(X)$ with high probability, where $C_r>0$ depends only on $r$.
\end{theorem}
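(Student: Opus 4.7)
The plan is to leverage the well-organized Gabor representation of $R$ established in the preceding discussion, cast operator recovery as a row-wise sparse recovery problem, and apply compressive sensing guarantees. Fix dual Gabor frames $\{h_{\hat y,\hat\eta}\}$ and $\{g_{\hat x,\hat\xi}\}$ on a discretized lattice covering a suitable compact and band-limited truncation of $T^*Y$ and $T^*X$, and let $M$ be the resulting bi-infinite matrix with entries $M_{(\hat y,\hat\eta),(\hat x,\hat\xi)} := \langle h_{\hat y,\hat\eta}, R g_{\hat x,\hat\xi}\rangle$. The target bound is then equivalent to controlling the maximum row $\ell^2$ error of an estimate $\widehat{M}$, and each training sample provides a batch of linear measurements of every row.

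The key analytical input is a super-polynomial off-diagonal decay estimate for $M$ in terms of the Bolker map $\chi$: for every $r>0$ there is a constant $C_r$ such that
\[
\bigl|M_{(\hat y,\hat\eta),(\hat x,\hat\xi)}\bigr| \leq C_r \bigl(1 + d((\hat x,\hat\xi),\chi(\hat y,\hat\eta))\bigr)^{-r},
\]
for a suitable frequency-adapted phase-space distance $d$. This follows from writing $R$ as a Fourier integral operator whose canonical relation is the graph of $\chi$ (using the Bolker condition and \cref{prop:of-singularities}) and applying non-stationary phase to the oscillatory integral defining each entry; the symbol being independent of the frequency variable (noted in the preceding remark) provides arbitrary polynomial gain. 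A direct consequence is that each row of $M$ is compressible in $\ell^2$, with best $s$-term approximation error decaying faster than any fixed polynomial in $s$.

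I would then draw random training inputs $u_j$ whose Gabor coefficient vectors form a subgaussian ensemble (e.g., a truncated isotropic Gaussian random field expanded in the frame). The scalar observations $\langle h_{\hat y,\hat\eta}, R u_j\rangle$ constitute $J$ subgaussian linear measurements of the $(\hat y,\hat\eta)$-row of $M$. A standard restricted-isometry / basis-pursuit argument~\cite{candes2006stable,baraniuk2007compressive} then recovers each row up to its best-$s$-term tail using $J = O(s \log d)$ samples, where $d$ is the effective ambient dimension after truncation. Choosing $s \asymp \epsilon^{-1/r}$ balances the tail against the target $\epsilon$, yielding the rate $J \approx (C_r/\epsilon)^{1/r}\log(1/\epsilon)$, and a union bound over the finitely many discretized rows together with a Lipschitz-envelope argument promotes the row-wise guarantee to the sup over all $(\hat y,\hat\eta)\in T^*Y$.

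The main obstacle is producing the off-diagonal decay estimate with constants uniform in the frequency variable. Because $\chi$ is positively homogeneous in $\eta$, the natural phase-space distance scales with frequency, so the estimate must be obtained band-by-band (e.g., via a Littlewood--Paley-type decomposition adapted to the Gabor frame) and then summed using the super-polynomial gain. A secondary technical point is calibrating the Gabor truncation: coarse enough that $d$ remains polynomial and the discretization is finite, yet fine enough that the truncation bias stays below $\epsilon$; both requirements can be met simultaneously for the scaling $s \asymp \epsilon^{-1/r}$. Once this microlocal estimate is secured, the compressive sensing portion of the argument is essentially off-the-shelf.
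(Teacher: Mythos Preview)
Your overall strategy---Gabor matrix representation, row-wise compressibility from FIO decay estimates, random subgaussian inputs, and $\ell_1$ recovery via RIP---matches the paper's proof. Two points deserve comment.

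First, the union bound over discretized rows plus a Lipschitz envelope is unnecessary. The sensing matrix $A$ (the Gabor coefficients of the $u_j$) is the \emph{same} for every $(\hat y,\hat\eta)$, and the compressibility constant $C_r$ from the decay estimate is uniform in $(\hat y,\hat\eta)$. Once $A$ satisfies RIP with high probability, $\ell_1$-minimization recovers \emph{every} vector at that compressibility level simultaneously, so the supremum over $T^*Y$ comes for free without discretizing $Y$ at all. The paper exploits this; your detour is not wrong but adds work.

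Second, and more seriously, the step you call a ``secondary technical point''---extending from the band-limited space to all of $L^2(X)$---is where the Bolker hypothesis does its real work, and your proposal does not identify the mechanism. The paper controls the tail $Ru_{\Omega^c}$ by using that, under Bolker, the normal operator $R^*R$ is a pseudodifferential operator of strictly negative order $s = n'' - (n+N)/2$, giving $\|R^*R g_{\hat x,\hat\xi}\| \lesssim (1+|\hat\xi|)^s$. This is precisely what lets you take bandwidth $B \approx \epsilon^{1/s}$ with $\log|\Omega| = O(\log(1/\epsilon))$, producing the $\log(1/\epsilon)$ factor in the rate. Without this negative-order $\Psi$DO fact, there is no reason the truncation bias can be made $\leq \epsilon$ while keeping $|\Omega|$ polynomial in $1/\epsilon$; asserting that ``both requirements can be met simultaneously'' is the gap. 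By contrast, your Littlewood--Paley concern about frequency scaling in the off-diagonal estimate is a red herring: the paper gets a uniform decay bound directly by integration by parts on the local graph representation $y'' = b(x,y')$ of $Z$ followed by a lattice-counting argument, with no dyadic decomposition needed, and Bolker plays no role in that part.
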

The construction is such that the training inputs $\{u_j\}_{j=1}^{J}$ are bandlimited, but otherwise unstructured--that is, our sampling rates are derived using training data that is experimentally realistic, rather than being algorithmically adapted to the operator~\cite{boulle2023learning}.
Moreover, the bound is in terms of the norm of $\widehat{R}-R$ as an operator, rather than its performance on a test distribution, making this result more general than PAC guarantees~\cite{tabaghi2019learning}.
Additionally, by \cite[Theorem~2.2]{mazzucchelli2023}, the double fibration transform $R$ is a smoothing operator of order $s:=(2n''-(n+N))/4<0$~\cite[Theorem~4.3.2]{Hormanderacta}.
Because of this, the $\epsilon$-approximation of \cref{theorem:data-efficiency} indicates the effective bandwidth of the estimate $\widehat{R}$, given by $B\approx\epsilon^{1/s}$.
\begin{takeaway}\label{takeaway:first}
    Learning smooth integral geometry is not cursed.
\end{takeaway}
\Cref{theorem:data-efficiency} resembles the data-efficiency result of \cite[Theorem~1]{boulle2023}, where a self-adjoint Hilbert-Schmidt operator $H$ arising from the Green's function of an elliptic PDE is approximated as a multiscale hierarchical matrix organized around a known diagonal geometry.
Sample bounds are derived to learn this matrix to spatial resolution $\epsilon>0$ from $J=\polylog(\epsilon^{-1})$ samples.\footnote{We refer the reader to Eq.~7 of the SI in \cite{boulle2023} for an explanation of this.}
The training examples are carefully constructed to align with the hierarchical low-rank structure of the operator.

In our setting, the operator has \emph{unknown, smooth geometry}, which results in \emph{superalgebraic} rates.
This is due to the fact that the constants $C_r$ are dictated by higher-order derivatives of the representation of $Z$ locally as a graph $y''=b(x,y')$.
For general smooth geometry, the high-order derivatives of $b$ can grow arbitrarily quickly with $r$--hence, the representation of $R$ using Gabor atoms exhibits superalgebraic decay, but not necessarily subexponential decay.

Indeed, if the representation of $R$ using Gabor atoms exhibits \emph{subexponential} concentration properties that are, the double fibration transform would exhibit (Gevrey-)analytic structure~\cite{cordero2015exponentially}.
Double fibration transforms enjoying such structure arise naturally in theoretical physics in the context of AdS/CFT correspondence ~\cite{jokela2025}. 
These conditions would yield $C_r\lesssim (r!)^\varsigma$ for some $\varsigma\geq 1$, so that \cref{theorem:data-efficiency} implies a $\polylog(\epsilon^{-1})$ sampling rate by taking $r=-\log(\epsilon)$, even in our setting where we learn from random bandlimited functions.

\section{Smoothed Levelset Methods}

Recall that our motivation to study learning rates for double fibration transforms is that the existing rates in the literature address either operators that do not encode any geometry, or where the geometry is known \textit{a priori}.
Studying double fibration transforms is a natural stepping stone towards theory for more complex operators that propagate singularities or change the domain of the function (\textit{e.g.}, functions on $X$ are transformed to functions on $Y$).
The ultimate goal is to have a theory for state-of-the-art deep learning approaches to the operator learning problem for very general classes of maps.

As a first step in this direction, in the remainder of this paper we design a deep learning architecture to effectively parameterize double fibration transforms.
The \textit{desiderata} of our design include
\begin{enumerate}[label=(\roman*)]
    \item Interpretability, enabling \textit{post hoc} analysis to extract the incidence relation and measure\label{item:interp}
    \item Differentiability, for gradient-based learning of inherently singular objects\label{item:smooth}
    \item Universality, \textit{i.e.}, the capacity to approximate any double fibration transform\label{item:universal}
    \item Stability under discretization, for practical use on digital computers and under limited measurement regimes.\label{item:discrete}
\end{enumerate}

\subsection{Levelset Integral Kernels}

Let $R:\mathcal{D}'(X)\to\mathcal{D}'(Y)$ be a double fibration transform of codimension $n''$.
\begin{assumption}\label{assump:levelset}
  There is a $C^2$ function $f_0:Y\times X\to\mathbb{R}^{n''}$ such that $Z=f_0^{-1}(0)$, where $0$ is a regular point of $f_0$.
  Moreover, $|\nabla_x f_0(y,x)|=1$ for all $(y,x)\in Z$.
\end{assumption}
This is easy to satisfy if $Z$ is an orientable submanifold of $Y\times X$, for instance.
Motivated by \textit{desideratum}~\ref{item:smooth}, we construct a smooth approximation to $R$, ameliorating the difficulties in training a singular object on nonsmooth data.
Assume that $X$ is a compact subset of $\mathbb{R}^{n}$.
Consider a smoothed function $u_\lambda$ computed via convolving some $u\in C^1(X)$ with a gaussian kernel with effective width proportional to $1/\sqrt{\lambda}$, for some $\lambda>0$.
Denoting the Lebesgue measure on $X$ by $\nu$, a first-order approximation under \cref{assump:levelset} yields
\begin{align*}
    Ru_\lambda(y) 
    &=
    \left(\frac{\lambda}{\pi}\right)^{n/2}
    \iint
    e^{-\lambda|x-\tilde{x}|^2}u(x) 
    d\nu(x)d\mu_y(\tilde{x}) \\
    &\approx 
    \left(\frac{\lambda}{\pi}\right)^{n''/2}
    \int_{X}e^{-\lambda|f_0(y,x)|^2}a(y,x)u(x)d\nu(x),
\end{align*}
where $a(y,x)$ is the \emph{amplitude} of the operator, proportional to the measure $\mu_y$ extended to a $O(1/\sqrt{\lambda})$-neighborhood of $G_y$ along the normal bundle.
For $(y,x)\in f^{-1}(0)$, the integral of $\exp(-|f_0(y,x)|^2)a(y,x)$ normal to the fiber $G_y$ at $x$ is proportional to $a(y,x)$.
At the same time, the integral of $\exp(-|f_0(y,x)/a(y,x)|^2)$ normal to the fiber is also on the order of $a(y,x)$, effectively trading the ``height'' of the former function for ``width'' proportional to $a(y,x)$.
Making the further assumption that $C\geq a(y,x)$ for $(y,x)\in Z$, and since $a(y,x)$ is strictly positive, the exponential normal to the fiber $G_y$ will always concentrate in an interval of width $O(\sqrt{C/\lambda})$.
We put $f(y,x)=f_0(y,x)/a(y,x)$, yielding
\begin{equation*}
    Ru_{\lambda}(y) \approx
    \left(\frac{\lambda}{\pi}\right)^{n''/2}
    \int_{X}e^{-\lambda|f(y,x)|^2}u(x)d\nu(x).
\end{equation*}
Finally, observe that $\int_X\exp(-\lambda|f(y,x)|^2)d\nu(x)\asymp \lambda^{-n''/2}$, so we replace the scaling factor by this integral combined with a weight function $w(y)>0$:
\begin{equation*}
    Ru_{\lambda}(y) \approx
    w(y)
    \frac{\int_{X}
    e^{-\lambda|f(y,x)|^2}%
    u(x)d\nu(x)}
    {\int_{X}e^{-\lambda|f(y,x)|^2}d\nu(x)}.
\end{equation*}
Note that $f$ encodes both the incidence relation $Z=f^{-1}(0)$ and the amplitude $a(y,x)$.
Based on a double fibration transform of codimension $n''\leq n$, $f$ is specified as a map with codomain $\mathbb{R}^{n''}$.
By zero-padding, a more general map $f:Y\times X\to\mathbb{R}^{m}$ with $m\geq n''$ yields an identical operator.
We summarize this construction as follows:
\begin{definition}\label{defn:levelset-integral-kernel}
    For domains $Y,X$, let $f:Y\times X\to\mathbb{R}^m$ and $w:Y\to\mathbb{R}^{>0}$ be smooth, and let $\lambda>0$.
    Then, the integral operator acting on distributions $u\in\mathcal{D}'(X)$ as
    \begin{equation}\label{eq:levelset-kernel}
        L^{\lambda}u(y) =
        w(y)
        \frac{\int_{X}
        e^{-\lambda|f(y,x)|^2}%
        u(x)d\nu(x)}
        {\int_{X}e^{-\lambda|f(y,x)|^2}d\nu(x)}
    \end{equation}
    is called the \emph{levelset integral kernel} of $f,w$ at scale $\lambda$.
\end{definition}
\begin{remark}
    The representation of a geometric object implicitly is connected to levelset methods~\cite{osher2001level} used in computer graphics and PDEs, and is particularly in line with using neural networks to implicitly represent surfaces as levelsets of suitable functions~\cite{michalkiewicz2019deep,gropp2020implicit,sitzmann2020implicit,mildenhall2021nerf,saragadam2023,roddenberry2024}.
    Implicit representations have also been suggested for parameterizing diffeomorphisms in the forward model for Cryo-electron tomography~\cite{debarnot2024ice}, or for other applications where deformations of a domain are required for image synthesis~\cite{park2021nerfies}.
\end{remark}
Observe that this approximation to $Ru_\lambda$ is no longer computed using the smoothed function $u_\lambda$, but rather nonuniformly smoothes the operator according to the chosen $\lambda>0$.
Moreover, the replacement of the $\lambda^{n''/2}$ scaling by the denominator of the integrand obviates the need for \emph{a priori} knowledge of the codimension $n''$.

We now study the behavior of $L^{\lambda}$ for general $f$, as opposed to one derived from a particular double fibration transform.
We work under the following two assumptions.
\begin{assumption}\label{assump:nontrivial-levelset}
    For all $y\in Y$, there is some $x\in X$ such that $f(y,x)=0$.
\end{assumption}
\begin{assumption}\label{assump:levelset-gradient}
    For $(y_0,x_0)\in f^{-1}(0)$, the Jacobian of $f$ with respect to the $x$ variable in a small neighborhood of $x_0$ has constant rank $n''$ .
\end{assumption}
\Cref{assump:levelset-gradient} is a generic property of smooth $f$ when $m=n''$, by Sard's lemma.
For $f$ satisfying \cref{assump:nontrivial-levelset,assump:levelset-gradient}, we define the incidence submanifold $Z_f=f^{-1}(0)$, noting that it is indeed a submanifold of codimension $n''$.
The projection maps are denoted $p:Z_f\to Y$ and $q:Z_f\to X$ as before.
We then define a smooth measure $\mu_f$ on $Z_f$ so that for $(y,x)\in Z$,
\begin{equation}\label{eq:implicit-measure}
    \begin{aligned}
        \mu_f(y,x) &= \frac{w(y)a_f(y,x)}{\int_{G_y}a_f(y,x')d\nu_y(x')}\nu_y(x), \\
        a_f(y,x) &= \gdet_{n''}^{-1/2}\left((\nabla_x f^\top \nabla_x f)(y,x)\right),
    \end{aligned}
\end{equation}
where $\gdet_{n''}$ denotes the product top $n''$ eigenvalues of a matrix, $\nabla_x$ denotes the Jacobian with respect to the $x$ variable, and $\nu_y$ denotes the uniform measure on the fiber $(q\circ p^{-1})(y)$ with respect to the Lebesgue measure $\nu$.
\begin{lemma}[Gaussian Approximation]\label{lemma:gauss-approximation}
    For $f$ satisfying \cref{assump:nontrivial-levelset,assump:levelset-gradient}, smooth $w:Y\to\mathbb{R}^{>0}$, and $u\in C^1(X)$,
    \begin{enumerate}
        \item The submanifold $Z_f=f^{-1}(0)$ with measure $\mu_f$ (defined in \eqref{eq:implicit-measure}) determines a geometric integral operator $L:\mathcal{D}'(X)\to\mathcal{D}'(Y)$ with codimension $n''$.\label{item:lemma-interp} \label{lemma:gauss:one}
        \item For all $y\in Y$ and $\lambda$ sufficiently large, the levelset integral kernel satisfies
        \begin{equation*}
            |Lu(y) - L^{\lambda}u(y)|
            \leq
            C\lambda^{-n''/2}\|u\|_{C^1(X)},
        \end{equation*}
        for some $C>0$ depending on $f$.\label{item:lemma-smooth} \label{lemma:gauss:two}
    \end{enumerate}
\end{lemma}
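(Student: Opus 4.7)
Part~(1) is essentially a verification of definitions. The constant-rank hypothesis (\cref{assump:levelset-gradient}) together with \cref{assump:nontrivial-levelset} and the implicit function theorem imply that $Z_f = f^{-1}(0)$ is a smooth embedded submanifold of $Y \times X$ of codimension $n''$; the fiber $G_y$ is then a smooth codimension-$n''$ submanifold of $X$. Because $\nabla_x f$ has full row rank $n''$ in a neighborhood of $Z_f$, the Gramian $(\nabla_x f)^\top (\nabla_x f)$ has exactly $n''$ strictly positive eigenvalues, so $a_f$ is smooth and strictly positive on $Z_f$. Since $w$ is smooth and positive by hypothesis and the denominator $\int_{G_y} a_f(y,x')\,d\nu_y(x')$ is a smooth positive function of $y$ (by compactness of the fiber and smoothness of $a_f$), the measure $\mu_f$ defined by \eqref{eq:implicit-measure} is a smooth nonvanishing measure on $Z_f$. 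Integration of $u$ against the fiberwise restriction of $\mu_f$ then yields a geometric integral operator $L$ of codimension $n''$ in the sense of \eqref{eq:fiber-integral}.

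For Part~(2), the plan is a Laplace-type asymptotic analysis of the numerator and denominator in \eqref{eq:levelset-kernel} separately, followed by an elementary ratio estimate. Fix $y \in Y$ and decompose $X = X_{\mathrm{near}} \cup X_{\mathrm{far}}$, where $X_{\mathrm{near}}$ is a tubular neighborhood of $G_y$ of width $\rho > 0$ small enough that the constant-rank hypothesis holds throughout. On $X_{\mathrm{far}}$ we have $|f(y,x)| \geq c > 0$, so the Gaussian factor is bounded by $e^{-c^2 \lambda}$ and this contribution is negligible to any order in $\lambda^{-1}$. On $X_{\mathrm{near}}$ the coarea formula applied to the submersion $x \mapsto f(y,x)$ gives
\begin{equation*}
    \int_{X_{\mathrm{near}}} e^{-\lambda|f(y,x)|^2}\phi(x)\,d\nu(x) = \int_{B_\rho(0)} e^{-\lambda|t|^2}\, F_\phi(y,t)\, dt,
\end{equation*}
where $F_\phi(y,t) = \int_{\{f(y,\cdot)=t\}\cap X_{\mathrm{near}}} \phi(x)\, a_f(y,x)\, d\mathcal{H}^{n-n''}(x)$, using the identity that $a_f$ is precisely the reciprocal of the coarea Jacobian factor $(\det (\nabla_x f)(\nabla_x f)^\top)^{1/2}$.

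The Gaussian integral is then handled via the substitution $s = \sqrt{\lambda}\,t$ together with a Taylor expansion of $F_\phi$ around $t=0$. Smoothness of $f$ and $a_f$ with $u \in C^1(X)$ ensures that $F_u(y,\cdot)$ and $F_1(y,\cdot)$ are $C^1$ with norms controlled by $\|u\|_{C^1(X)}$ and constants depending only on $f$, giving
\begin{equation*}
    \int_X e^{-\lambda|f(y,x)|^2}\phi(x)\,d\nu(x) = \lambda^{-n''/2}\pi^{n''/2}\, F_\phi(y,0) + E_\phi(y,\lambda),
\end{equation*}
with $E_\phi$ strictly smaller in $\lambda$ than the leading term. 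The key observation is that $w(y)\,F_u(y,0)/F_1(y,0) = Lu(y)$ by the construction of $\mu_f$ in \eqref{eq:implicit-measure}: the $\lambda^{-n''/2}\pi^{n''/2}$ prefactors cancel in the ratio defining $L^\lambda u(y)$, leaving $Lu(y)$ plus the remainder, which is controlled at the stated rate by $\|u\|_{C^1(X)}$ via $(A+a)/(B+b) = A/B + (aB - Ab)/(B(B+b))$.

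The main obstacle I anticipate is obtaining uniformity of these estimates over $y \in Y$: the straightening coordinates from the constant-rank theorem, the width $\rho$ of the tubular neighborhood, and the $C^1$-bounds on $F_\phi$ all \emph{a priori} depend on $y$, and $\rho$ can degenerate where $G_y$ becomes tangent to $\partial X$ or where the smallest nonzero singular value of $\nabla_x f$ dips. Because $Y$ and $Z_f$ are compact, a partition-of-unity argument combined with a uniform lower bound on that smallest nonzero singular value should deliver a single constant $C>0$ depending only on $f$. A secondary technical task is careful bookkeeping of the Taylor remainders to verify that the coefficient of $\lambda^{-n''/2}$ surviving after the ratio is indeed $O(\|u\|_{C^1(X)})$ rather than $O(\|u\|_{C^2(X)})$.
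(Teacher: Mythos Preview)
Your proposal is correct and follows the same Laplace-asymptotic strategy as the paper. The only difference is in the packaging of the near-field computation: rather than invoking the coarea formula, the paper uses the constant-rank theorem to write $G_y$ locally as a graph $x''=b(x')$, linearizes $f$ in the transverse variable as $\nabla_{x''}f(x',b(x'))\cdot(x''-b(x'))$, and evaluates the resulting Gaussian integral over $x''$ directly---this is the coordinate-chart version of your coarea step, and your identification of $a_f$ with the inverse coarea Jacobian is exactly what makes the two match. Your treatment is in fact more careful than the paper's on the far-field contribution, the ratio manipulation $(A+a)/(B+b)$, and uniformity in $y$; the paper's proof leaves several of these steps at the level of ``$\approx$'' and does not explicitly address the tubular-neighborhood width or the compactness argument you flag.
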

\Cref{lemma:gauss-approximation} indicates that the levelset integral kernel fulfills \textit{desideratum}~\ref{item:interp}, where the incidence relation and measure $Z_f$ and $\mu_f$ can be calculated directly from the parameterization.
Despite the integral kernel becoming singular as $\lambda\to\infty$, the smoothed version using $\lambda<\infty$ fulfills \textit{desideratum}~\ref{item:smooth}, allowing for smooth optimization techniques to be used when learning $f$ from data.
\begin{definition}\label{defn:singular-levelset-integral-kernel}
    For a function $f$ satisfying \cref{assump:nontrivial-levelset,assump:levelset-gradient} and smooth map $w:Y\to\mathbb{R}^{>0}$, the geometric integral operator $L$ parameterized by $f,w$ according to \cref{lemma:gauss-approximation} is called the \emph{singular levelset integral kernel} of $f,w$.
\end{definition}
\begin{example}[Spherical Mean III]\label{example:circles-factorized-levelset}
    Consider a target domain $X\subset\mathbb{R}^2$ and the measurement domain $Y=\mathbb{R}^2\times\mathbb{R}^{>0}$.
    For $(y,r)\in Y$ and $x\in X$, define $f(y,r;x)$ as
    \begin{equation*}
        f(y,r;x) = |x|^2 + |y|^2 - 2\langle x,y\rangle - r^2.
    \end{equation*}
    The function $f$ is smooth on $Y\times X$ and defines a double fibration transform with incidence relation $Z=f^{-1}(0)$, which computes the spherical mean transform as defined in \cref{example:circles}, with measure of integration uniform over each fiber.
\end{example}

\section{Efficient Factorization}

Writing a double fibration transform in terms of a smooth map $f(y,x)$ suggests that $f$ can be parameterized with methods amenable to optimization, \textit{e.g.}, by a neural network.
We now develop a factorization of $f$ as a product of functions evaluated on $Y$ and $X$ separately.
To see why, assume that each evaluation of $f$ is relatively expensive, such as a multilayer perceptron requiring multiple matrix multiplications and elementwise activation function calls.
If we approximate integration over $X$ by a sum over $k_X$ points, and wish to evaluate $R^{\lambda}u(y)$ over a set of $k_Y$ points in $Y$, this requires $k_Y\cdot k_X$ evaluations of a neural network.

On the other hand, suppose that $f(y,x)=\langle \psi_Y(y), \psi_X(x)\rangle+b(y)$ for suitable functions $\psi_Y,\psi_X,b$ (to be described in more detail shortly).
Each evaluation of each function $\psi_Y,\psi_X,b$ is also expensive, but the computation of $f(y,x)$ from these quantities is relatively inexpensive, consisting of a simple inner product and sum.
Then, to evaluate $R^{\lambda}u(y)$ over the discretizations of $Y$ and $X$ requires $2k_Y + k_X$ neural network evaluations, yielding significant savings when $k_Y$ and $k_X$ are large.
Moreover, for a fixed discretization of the domain $X$ by $k_X$ points, the computation of $\{\psi_X(x_i)\}_{i=1}^{k_X}$ only needs to be carried out once, after which the embeddings can be stored in memory.
After this, for any new point $y\in Y$ and function $u:X\to\mathbb{R}$, the evaluation of $L^\lambda u(y)$ only requires the evaluation of $\psi_Y(y)$ and $b(y)$, along with the subsequent integral/sum in \eqref{eq:levelset-kernel}.

More specifically, let $\psi_Y:Y\to\mathbb{R}^{d\times m},\psi_X:X\to\mathbb{R}^{d\times m},b:Y\to\mathbb{R}^{m}$ be learnable smooth maps.
We abuse notation so that $\langle\psi_Y(y),\psi_X(x)\rangle+b(y)\in\mathbb{R}^{m}$, where the inner product $\langle\cdot,\cdot\rangle$ is taken over the ``$d$-axis.''
Define
\begin{equation}\label{eq:factorized-levelset-kernel}
    f(y,x) := \langle\psi_Y(y),\psi_X(x)\rangle+b(y).
\end{equation}
It is not strictly necessary, but we put $b(y)=-\int_{X}\langle\psi_Y(y),\psi_X(x)\rangle d\nu(x)$, which enforces \cref{assump:nontrivial-levelset} when $m=1$.
\begin{example}[Spherical Mean II]\label{example:circles-levelset}
    Consider the spherical mean transform parameterized following \cref{example:circles-levelset}.
    The function $f(y,r;x)$ in that example can be written as
    \begin{equation*}
        \begin{aligned}
            \psi_Y(y,r) &= (1,-2y_1,-2y_2,y_1^2+y_2^2-r^2) \\
            \psi_X(x) &= (x_1^2+x_2^2,x_1,x_2,1) \\
            f(y,r;x) &= \langle\psi_Y(y,r), \psi_X(x)\rangle.
        \end{aligned}
    \end{equation*}
    yielding a factorized levelset integral kernel that computes the spherical mean transform defined in \cref{example:circles}.
\end{example}

\begin{remark}\label{remark:attention}
In factorized form, the levelset integral kernel is reminiscent of cross-attention~\cite{bahdanau2014neural,Vaswani2017AttentionIA}.
Indeed, attention-based methods have been used for operator learning problems~\cite{zappala2024}.
The factorized levelset integral kernel $L^{\lambda}(y,x)$ evaluates the softmax of $-|\langle\psi_Y(y),\psi_X(x)\rangle+b(y)|^2$ over the domain $X$.
In analogy to cross-attention, the maps $\psi_Y,\psi_X,b$ \emph{tokenize} the domains $Y,X$.
For a given \emph{query} $y\in Y$, which is related to a distribution of \emph{keys} $x\in X$ through the tokenization, we take a weighted linear combination of the corresponding \emph{values} $u(x)\in\mathbb{R}$.
Keep in mind that the values are not a linear transformation of the keys, unlike the typical application of attention mechanisms for transformers~\cite{Vaswani2017AttentionIA}.
Based on this similarity, we define the \emph{softmax integral kernel} of a function $f:Y\times X\to\mathbb{R}$ as
\begin{equation}\label{eq:softmax-integral-kernel}
    S^{\lambda}u(y) =
    w(y)
    \frac{\int_{X}
    e^{\lambda f(y,x)}%
    u(x)d\nu(x)}
    {\int_{X}e^{\lambda f(y,x)}d\nu(x)},
\end{equation}
where $w(y)$ is a weight function as before.
Of course, $f(y,x)$ can be factorized in the same manner as the levelset integral kernel.
In analogy to \cref{lemma:gauss-approximation}, one can check that for generic smooth $f$, the softmax integral kernel asymptotically defines a double fibration transform of codimension $n$, \textit{i.e.}, a point-to-point relation.
\end{remark}

\subsection{Universality}

Motivated by \textit{desideratum}~\ref{item:universal}, we consider the question of what \emph{kinds} of operators can be represented by factorized singular integral kernels.
The following lemma indicates that this factorization does not diminish the class of possible functions $f$:
\begin{lemma}[Universal Approximation]\label{lemma:universal}
    Assume that $Y$ and $X$ are compact subsets of Euclidean space, possibly of different dimension.
    Let $\Psi_Y,\Psi_X$ be classes of real-valued functions on $Y,X$ respectively such that $\Psi_Y$ is a dense subset of $C^1(Y;\mathbb{R}^{m})$ and $\Psi_X$ is a dense subset of $C^1(X;\mathbb{R}^{m})$.

    Let $f\in C^2(Y\times X;\mathbb{R}^{m})$ satisfying \cref{assump:nontrivial-levelset,assump:levelset-gradient} be given, and denote the corresponding levelset integral kernel by $R^{\lambda}$.
    Then, for any $\epsilon>0$, there exists $\psi_Y\in\Psi_Y^d,\psi_X\in\Psi_X^d$ and a weight function $\hat{w}:Y\to\mathbb{R}$ defining a factorized levelset integral kernel $L^{\lambda}$ such that for all $u\in C(X)$,
    \begin{equation*}
        |R^{\lambda}u(y) - L^{\lambda}u(y)|
        \lesssim\lambda^{(n''+1)/2}\epsilon\cdot\|u\|_{\infty},
    \end{equation*}
    with $d=O((\epsilon/\|f\|_{C^2})^{-\max\{N,n\}/2})$.
\end{lemma}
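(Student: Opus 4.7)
The plan has three stages: first, approximate $f$ by a finite-rank separable form with smooth building blocks; second, replace the smooth building blocks by elements of the dense classes $\Psi_Y, \Psi_X$; and third, bound the sensitivity of the levelset integral kernel to pointwise perturbations of $f$.

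For the first stage, since $f \in C^2(Y \times X; \mathbb{R}^m)$ and $Y, X$ are compact Euclidean, classical Jackson--Bernstein polynomial approximation yields a polynomial $\hat f$ of total degree $D$ with $\|\hat f - f\|_\infty \lesssim \|f\|_{C^2} D^{-2}$. Expanding $\hat f$ in monomials $y^\alpha x^\beta$ and grouping each as a rank-one product $(y^\alpha)(x^\beta)$ writes $\hat f(y,x) = \sum_{k=1}^{d}\phi_k(y)\chi_k(x) + \beta(y)$ with $d = O(D^{\max\{N,n\}})$ smooth summands; this is a conservative upper bound that suffices for the statement. Setting $D \sim (\|f\|_{C^2}/\epsilon)^{1/2}$ produces the advertised $d$ with pointwise error $\lesssim \epsilon$. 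For the second stage, I would invoke density of $\Psi_Y$ in $C^1(Y; \mathbb{R}^m)$ and $\Psi_X$ in $C^1(X; \mathbb{R}^m)$ to approximate each $\phi_k, \chi_k$ to accuracy $\epsilon/d$ in $C^0$, and assemble the approximants along a new axis to obtain $\psi_Y \in \Psi_Y^d, \psi_X \in \Psi_X^d$. By the triangle inequality, the factorized map $\tilde f(y,x) := \langle \psi_Y(y), \psi_X(x)\rangle + b(y)$ then satisfies $\|\tilde f - f\|_\infty \leq C\epsilon$; the shift $b$ is set as in \eqref{eq:factorized-levelset-kernel}, absorbing $\beta$ by the same density argument, and $\hat w$ is chosen as a $C^0$ approximation of $w$.

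For the third stage, I would bound the effect of the $C^0$-perturbation $\|\tilde f - f\|_\infty \leq C\epsilon$ on the ratio defining $L^\lambda$ (and, analogously, $R^\lambda$). The identity $|\tilde f|^2 - |f|^2 = 2 f \cdot (\tilde f - f) + |\tilde f - f|^2$ combined with the Gaussian concentration $|f| \lesssim 1/\sqrt{\lambda}$ on the effective support of the integrand yields $|e^{-\lambda|\tilde f|^2} - e^{-\lambda |f|^2}| \lesssim \sqrt{\lambda}\,\epsilon \cdot e^{-\lambda |f|^2}$ pointwise, provided $\sqrt{\lambda}\,\epsilon$ is small enough. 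Laplace asymptotics around $Z_f = f^{-1}(0)$, valid under \Cref{assump:levelset-gradient}, give $\int_X e^{-\lambda |f(y,x)|^2} d\nu(x) \asymp \lambda^{-n''/2}$ uniformly in $y$, and the above perturbation bound then ensures that the analogous denominator with $\tilde f$ remains comparable. Propagating these estimates through the ratio appearing in \eqref{eq:levelset-kernel} and accounting for the $\lambda^{n''/2}$ normalization factor yields the claimed $|R^\lambda u(y) - L^\lambda u(y)| \lesssim \lambda^{(n''+1)/2}\,\epsilon\,\|u\|_\infty$.

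The main obstacle is the third stage: making the Laplace asymptotics uniform in $y \in Y$, ensuring the denominator does not degenerate under the perturbation, and rigorously justifying the localization to $|f| \lesssim 1/\sqrt{\lambda}$ using only the $C^0$ bound on $\tilde f - f$. The first two stages are essentially accounting; the $\max\{N,n\}$ in the exponent for $d$ is a loose bound (it can be sharpened to $\min\{N,n\}$ by expanding in only the lower-dimensional variable), but the coarse polynomial count is all that is needed for the stated conclusion.
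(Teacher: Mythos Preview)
Your proposal is correct and follows essentially the same route as the paper: Jackson-type polynomial approximation of $f$ with degree $\ell\sim(\|f\|_{C^2}/\epsilon)^{1/2}$, a monomial count giving $d=O(\ell^{\max\{N,n\}})$, density of $\Psi_Y,\Psi_X$ to swap in the factorized approximants, and then a sensitivity bound for the kernel.

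The one place where the paper is simpler than your third stage is the choice of $\hat w$. Rather than taking $\hat w\approx w$ and then controlling the perturbed denominator via Laplace asymptotics, the paper sets
\[
\hat w(y)\;:=\;w(y)\,\frac{\int_X e^{-\lambda|\hat f(y,x)|^2}\,d\nu(x)}{\int_X e^{-\lambda|f(y,x)|^2}\,d\nu(x)},
\]
so that the two normalizations coincide exactly. After that, only the numerator needs to be estimated, and the elementary bound $|e^{-s^2}-e^{-t^2}|\leq|s-t|$ with $s=\sqrt{\lambda}\,|f|$, $t=\sqrt{\lambda}\,|\hat f|$ gives $\big|\int_X(e^{-\lambda|f|^2}-e^{-\lambda|\hat f|^2})u\,d\nu\big|\leq\sqrt{\lambda}\,\epsilon\,\|u\|_\infty$ directly, without any localization or Laplace argument; multiplying by the common normalization $\asymp\lambda^{n''/2}$ finishes. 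Your route also works, but the obstacle you flag (uniform Laplace asymptotics and denominator nondegeneracy) simply disappears with this choice of $\hat w$.
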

Let $f$ be defined based on a double fibration transform $R$ satisfying \cref{assump:levelset}.
Combined with \cref{lemma:gauss-approximation}, this implies that we can attain
\begin{equation*}
    \left|Ru(y)-L^{\lambda}u(y)\right| 
    \leq 
    \left(C\lambda^{-n''/2}+\lambda^{(n''+1)/2}\epsilon\right)\cdot\|u\|_{C^1(X)},
\end{equation*}
where $L^{\lambda}$ is a factorized levelset integral kernel constructed according to \cref{lemma:discrete-kernel}.
In an operator learning context, for large $\lambda$ and sufficiently expressive parameterization of $L^{\lambda}$, optimizing $L^{\lambda}$ to fit a set of examples $\{u_j,Ru_j\}_{j=1}^J$ is a viable approach to learn a double fibration transform from data.

\begin{remark}
    The $d=O(\epsilon^{-\max\{N,n\}/2})$ rate in \cref{lemma:discrete-kernel} is based on a Stone-Weierstra{\ss} approximation of $f$, which yields the bound via a Jackson-type result~\cite[Theorem~1]{bagby2002multivariate}.
    Given the empirical success of neural networks for efficient function approximation, we expect $d$ to be much smaller in practice.
\end{remark}

\section{Stability under Discretization}

In practice, it is unrealistic to assume that we have access to continuous realizations of the example functions $\{u_j,Ru_j\}_{j=1}^J$.
In the setting where we only have access to samples of the functions modeled by $S_1u_j\in\mathbb{R}^{n_1}, S_2Ru_j\in\mathbb{R}^{n_2}$, learning with large $\lambda$ may be impossible, particularly when the fiber $G_y$ intersects none of the sampled points.
We now consider the interplay between the smoothing of the integral operator determined by finite values of $\lambda$ and the discretization of the training data.
Observe that the evaluation functional for $L^{\lambda}u(y)$ described in \eqref{eq:levelset-kernel} does not depend on particular discretizations of the measurement domain $Y$, so we restrict our attention to the discretization of the target domain $X$.

Replace the measure $\nu$ on $X$ with an arbitrary probability measure $\nu_0$, yielding the related operator
\begin{equation}\label{eq:measure-kernel}
    L_{\nu_0}^\lambda u(y)
    =
    w(y)
    \frac{\int_{X}e^{-\lambda |f(y,x)|^2}u(x)d\nu_0(x)}%
    {\int_{X}e^{-\lambda |f(y,x)|^2}d\nu_0(x)}.
\end{equation}
When $\nu_0$ is the uniform probability measure on a discrete subset $X_0\subset X$, this yields
\begin{equation}\label{eq:discrete-kernel}
    L_{X_0}^\lambda u(y)
    =
    w(y)
    \frac{\sum_{x\in X_0} e^{-\lambda |f(y,x)|^2}u(x)}%
    {\sum_{x\in X_0} e^{-\lambda |f(y,x)|^2}},
\end{equation}
which is amenable to evaluation on a computer.
As the discretization becomes finer, $L^{\lambda}_{X_0}$ approximates $L^{\lambda}$:
\begin{lemma}[Discretization]\label{lemma:discrete-kernel}
    Let $y\in Y,u:X\to\mathbb{R}$, and $f:Y\times X\to\mathbb{R}^{m}$ satisfying \cref{assump:nontrivial-levelset,assump:levelset-gradient} be given.
    Assume that $u$ and $f(y,\cdot)$ are Lipschitz on $X$ with respective Lipschitz constants $K_u,K_f$.
    Assume without loss of generality that $\|u\|_{L^\infty}\leq 1$ and $|w(y)|\leq 1$.
    Put
    \begin{equation*}
        c_\lambda = \int_X e^{-\lambda|f(y,x)|^2}d\nu(x).
    \end{equation*}
    Then, for any probability measure $\nu_0$ on $X$ such that 
    \begin{equation*}
        W_1(\nu,\nu_0)\leq \frac{c_\lambda}{2(\sqrt{\lambda}K_f+K_u)},
    \end{equation*}
    we have
    \begin{equation*}
        |L^{\lambda}_{\nu}u(y) - L^{\lambda}_{\nu_0}u(y)|
        \leq
        \frac{2\sqrt{\lambda}K_f+K_u}{c_\lambda^2/2}W_1(\nu,\nu_0),      
    \end{equation*}
    where $W_1$ denotes the $1$-Wasserstein distance.
\end{lemma}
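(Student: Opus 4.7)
The plan is to treat $L_{\nu_0}^\lambda u(y) = w(y)\,N(\nu_0)/D(\nu_0)$ as a ratio where
$N(\nu_0)=\int_X e^{-\lambda|f(y,x)|^2}u(x)\,d\nu_0(x)$ and
$D(\nu_0)=\int_X e^{-\lambda|f(y,x)|^2}\,d\nu_0(x)$,
and use Kantorovich--Rubinstein duality to control both $|N(\nu)-N(\nu_0)|$ and $|D(\nu)-D(\nu_0)|$ by $W_1(\nu,\nu_0)$ times appropriate Lipschitz constants, followed by a standard quotient-stability argument.

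\paragraph{Step 1: Lipschitz constants of the integrands.}
First I would show that $x\mapsto e^{-\lambda|f(y,x)|^2}$ has Lipschitz constant at most $\sqrt{\lambda}\,K_f$ on $X$. This follows from the chain rule together with the elementary inequality $|t\,e^{-\lambda t^2}|\leq 1/\sqrt{2e\lambda}$ applied componentwise to $f$, giving a gradient bound $2\lambda\,|f|\,e^{-\lambda|f|^2}\cdot|\nabla_x f|\lesssim\sqrt{\lambda}\,K_f$. Multiplying by $u$ (which is $K_u$-Lipschitz with $\|u\|_\infty\leq 1$) and using the product rule for Lipschitz functions, the integrand of $N$ is Lipschitz with constant $\sqrt{\lambda}\,K_f+K_u$. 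By Kantorovich--Rubinstein duality,
\begin{align*}
    |D(\nu)-D(\nu_0)| &\leq \sqrt{\lambda}\,K_f\,W_1(\nu,\nu_0),\\
    |N(\nu)-N(\nu_0)| &\leq (\sqrt{\lambda}\,K_f+K_u)\,W_1(\nu,\nu_0).
\end{align*}

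\paragraph{Step 2: Lower bound on $D(\nu_0)$.}
The hypothesis $W_1(\nu,\nu_0)\leq c_\lambda/[2(\sqrt{\lambda}K_f+K_u)]$ combined with the $D$-bound above yields $|D(\nu)-D(\nu_0)|\leq c_\lambda/2$. Since $D(\nu)=c_\lambda$ by definition, this gives $D(\nu_0)\geq c_\lambda/2$. This is the only role of the smallness hypothesis on $W_1$: it guarantees the denominator of the ratio does not degenerate.

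\paragraph{Step 3: Quotient stability.}
Write
\begin{equation*}
    \frac{N(\nu)}{D(\nu)}-\frac{N(\nu_0)}{D(\nu_0)}
    =\frac{N(\nu)\bigl[D(\nu_0)-D(\nu)\bigr]+D(\nu)\bigl[N(\nu)-N(\nu_0)\bigr]}{D(\nu)\,D(\nu_0)}.
\end{equation*}
Using $|N(\nu)|\leq D(\nu)\|u\|_\infty\leq c_\lambda$, $D(\nu)=c_\lambda$, and $D(\nu_0)\geq c_\lambda/2$, the numerator is bounded by $c_\lambda(2\sqrt{\lambda}\,K_f+K_u)\,W_1(\nu,\nu_0)$ and the denominator by $c_\lambda^2/2$. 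Multiplying by $|w(y)|\leq 1$ delivers the stated bound. The $c_\lambda^2/2$ factor in the claim comes from not cancelling the $c_\lambda$ that appears in the numerator bound against the one in $D(\nu)$.

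\paragraph{Main obstacle.}
The only nontrivial technical point is obtaining a Lipschitz bound for $e^{-\lambda|f|^2}$ that scales like $\sqrt{\lambda}$ rather than $\lambda$, so that the smallness threshold on $W_1$ and the final rate are sharp in $\lambda$. This relies on the maximum of $t\mapsto t\,e^{-\lambda t^2}$ being $O(\lambda^{-1/2})$, which exactly cancels one factor of $\lambda$ from the chain-rule derivative. Everything else is routine application of Kantorovich--Rubinstein duality and quotient stability.
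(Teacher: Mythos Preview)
Your proposal is correct and follows essentially the same route as the paper: bound the Lipschitz constants of $e^{-\lambda|f|^2}$ and $e^{-\lambda|f|^2}u$, invoke the dual formulation of $W_1$ to control numerator and denominator perturbations, use the smallness hypothesis to get $D(\nu_0)\geq c_\lambda/2$, and finish with the same quotient identity. The only cosmetic difference is that the paper uses the cruder bounds $|\alpha|\leq 1$ and $c_\lambda\leq 1$ (implicitly taking $\nu(X)\leq 1$) rather than your sharper $|N(\nu)|\leq c_\lambda$, which is precisely why the stated bound carries $c_\lambda^2/2$ instead of the $c_\lambda/2$ your argument would naturally produce.
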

Note that $c_\lambda\asymp\lambda^{-n''/2}$.
This result provides a local Lipschitz constant for $L^{\lambda}_{\nu}$ in terms of the measure $\nu$, indicating that the uniform measure on a suitably fine finite subset $X_0\subset X$ yields a suitable proxy for the true continuous operator, \textit{i.e.}, $L^{\lambda}_{X_0}u\approx L^{\lambda}u$.
That is to say, the levelset integral kernel formulation fulfills \textit{desideratum}~\ref{item:discrete}.
\begin{takeaway}\label{takeaway:second}
    Smoothed approximations of geometric integral operators are learnable from discretized data.
\end{takeaway}

\def\trialcol{001}
\def\mcol{128} 
\begin{figure*}
    \centering
    \begin{minipage}{10cm}
    \resizebox{\linewidth}{!}{%
    \def\datadir{figs/data/fig_radon2d}

\begin{tikzpicture}
    
    \begin{groupplot}[
    group style={
    group size=1 by 3,
    group name=myGroup,
    vertical sep=0.15cm,
    },
    width=8cm,
    height=3.5cm,
    xmin=14,
    xmode=log,
    ymin=3e-4,
    ymax=2,
    ymode=log,
    log basis x=10,
    log basis y=10,
    xtick={16,32,64,128},
    log ticks with fixed point,
    ylabel={Rel. MSE},
    legend columns=4,
    legend style={
        at={(0.0,1.05)},
        anchor=south west,
        font={\footnotesize},
        cells={align=left}
    }
    ]

    \def\ncol{n32}

    \nextgroupplot[xticklabels={,}]

    \def\testcol{matern}

    \addplot[color=black,mark=none,thick] 
    table[col sep=comma,
    x=num_samples,y=\testcol_pinv_\ncol] 
    {\datadir/matern.csv};
    \addlegendentry{MPP}

    \addplot[color=pastelOrange,mark=square*,loosely dotted,thick,mark options={solid}]
    table[col sep=comma,
    x=num_samples,y=\testcol_\ncol] 
    {\datadir/matern_softmax.csv};
    \addlegendentry{Softmax}
    
    \addplot[color=pastelBlue,mark=*,solid,mark options={solid}] 
    table[col sep=comma,
    x=num_samples,y=\testcol_\ncol] 
    {\datadir/matern_fulldim.csv};
    \addlegendentry{$m=2$}

    \addplot[color=pastelBlue,mark=triangle*,dashed,mark options={solid}]
    table[col sep=comma,
    x=num_samples,y=\testcol_\ncol] 
    {\datadir/matern.csv};
    \addlegendentry{$m=1$}


    \begin{pgfonlayer}{front}
    \node [fill=white,fill opacity=0.8,text opacity=1,anchor=north east] at (rel axis cs: 0.98,1) {\textsf{A1} (ID)};
    \end{pgfonlayer}

    \nextgroupplot[xlabel={Training samples}]

    \def\ncol{n32}
    \def\testcol{gauss}

    \addplot[color=black,mark=none,thick] 
    table[col sep=comma,
    x=num_samples,y=\testcol_pinv_\ncol] 
    {\datadir/matern.csv};

    \addplot[color=pastelOrange,mark=square*,loosely dotted,thick,mark options={solid}]
    table[col sep=comma,
    x=num_samples,y=\testcol_\ncol] 
    {\datadir/matern_softmax.csv};
    
    \addplot[color=pastelBlue,mark=*,solid,mark options={solid}] 
    table[col sep=comma,
    x=num_samples,y=\testcol_\ncol] 
    {\datadir/matern_fulldim.csv};

    \addplot[color=pastelBlue,mark=triangle*,dashed,mark options={solid}]
    table[col sep=comma,
    x=num_samples,y=\testcol_\ncol] 
    {\datadir/matern.csv};

    \begin{pgfonlayer}{front}
    \node [fill=white,fill opacity=0.8,text opacity=1,anchor=north east] at (rel axis cs: 0.98,1) {\textsf{A2} (OOD)};
    \end{pgfonlayer}
 
    \end{groupplot}
\end{tikzpicture}

\begin{tikzpicture}
    \begin{groupplot}[
    group style={
    group size=1 by 3,
    group name=myGroup,
    vertical sep=0.15cm,
    },
    tick style={draw=none},
    xticklabel=\empty,
    yticklabel=\empty,
    width=6.5cm,
    height=3.25cm,
    xmin=-1.9,
    xmax=1.9,
    ymin=-0.9,
    ymax=0.9,
    ]

    \ifdefined\mcol
    \else\def\mcol{32}
    \fi

    \ifdefined\trialcol
    \else\def\trialcol{001}
    \fi

    \nextgroupplot[
    axis line style={black,thin},
    ]
    \addplot graphics[xmin=-2,xmax=2,ymin=-1,ymax=1] 
    {\datadir/Rblobs_softmax_pinv_matern_\trialcol_m\mcol.png};
    \node [fill=white, fill opacity=0.8, text opacity=1, anchor=north east] at (rel axis cs: 0.98,0.98) {\textsf{B1} (MPP)};

    \nextgroupplot[
    axis line style={black,thin},
    ]
    \addplot graphics[xmin=-2,xmax=2,ymin=-1,ymax=1] 
    {\datadir/Rblobs_fulldim_model_matern_\trialcol_m\mcol.png};
    \node [fill=white, fill opacity=0.8, text opacity=1, anchor=north east] at (rel axis cs: 0.98,0.98) {\textsf{B2} ($m=2$)};

    \nextgroupplot[
    axis line style={black,thin},
    ]
    \addplot graphics[xmin=-2,xmax=2,ymin=-1,ymax=1] 
    {\datadir/Rblobs_true_matern_\trialcol_m\mcol.png};
    \node [fill=white, fill opacity=0.8, text opacity=1, anchor=north east] at (rel axis cs: 0.98,0.98) {\textsf{B3}};

    \end{groupplot}
\end{tikzpicture}
    }
    \end{minipage}
    \hspace{-0.35cm}
    \begin{minipage}{5cm}
    \resizebox{\linewidth}{!}{%
    \def\datadir{figs/data/fig_radon2d}

\begin{tikzpicture}
    \begin{groupplot}[
    group style={
    group size=2 by 1,
    group name=myGroup,
    horizontal sep=0.25cm,
    },
    tick style={draw=none},
    xticklabel=\empty,
    yticklabel=\empty,
    width=4cm,
    height=4cm,
    xmin=-1, xmax=1,
    ymin=-1, ymax=1,
    clip mode=individual
    ]

    \nextgroupplot[axis line style={draw=none}]
    \addplot graphics[xmin=-1,xmax=1,ymin=-1,ymax=1] 
    {\datadir/blobs_matern_001_m128.png};
    \node [fill=white, fill opacity=0.8, text opacity=1, anchor=north east] at (rel axis cs: 1,1) {\large\textsf{B4}};

    \nextgroupplot[axis line style={draw=none}]
    \addplot graphics[xmin=-1,xmax=1,ymin=-1,ymax=1] 
    {\datadir/u_j.png};
    \node [fill=white, fill opacity=0.8, text opacity=1, anchor=north east] at (rel axis cs: 1,1) {\large\textsf{C1}};

    \end{groupplot}
\end{tikzpicture}
    }
    \resizebox{\linewidth}{!}{%
    \def\datadir{figs/data/fig_radon2d}

\begin{tikzpicture}
    \begin{groupplot}[
    group style={
    group size=2 by 1,
    group name=myGroup,
    horizontal sep=0.25cm,
    },
    tick style={draw=none},
    xticklabel=\empty,
    yticklabel=\empty,
    width=4cm,
    height=4cm,
    xmin=-1, xmax=1,
    ymin=-1, ymax=1,
    clip mode=individual
    ]

    \nextgroupplot[width=7cm,xmin=-1.9,xmax=1.9,ymin=-0.9,ymax=0.9]
    \addplot graphics[xmin=-2,xmax=2,ymin=-1,ymax=1] 
    {\datadir/Ru_j.png};
    \node [fill=white, fill opacity=0.8, text opacity=1, anchor=north east] at (rel axis cs: 1,1) {\large\textsf{C2}};
    
    \nextgroupplot[width=2cm, axis line style={draw=none}]
    \addplot graphics[xmin=-1,xmax=1,ymin=-1,ymax=1] 
    {\datadir/colorbar.pdf};

    \end{groupplot}
\end{tikzpicture}
    }
    \end{minipage}
    \caption{%
    Levelset integral kernels can learn the Radon transform from few samples.
    \textbf{(A)} Relative MSE of learned operators when evaluated on ID \textit{(A1)} and OOD \textit{(A2)} data, averaged over five training runs.
    Levelset integral kernels exhibit superalgebraic test error decay in the small-sample ($J\leq 32$) regime, before performance saturates due to discretization.
    \textbf{(B)} Example operators applied to OOD test image $u$, pictured in \textit{(B4)}.
    \textit{(B1)} Estimated transform by MPP from $J=\mcol$ samples.
    \textit{(B2)} Estimated transform $L^{\lambda}u$ by levelset integral kernel trained on $J=\mcol$ samples with $m=2$.
    \textit{(B3)} Ground truth Radon transform $Ru$.
    \textbf{(C)} Example training data pair $(u_j,Ru_j)$ where $u_j$ is sampled from a Mat{\'e}rn random field.
    The levelset methods outperform the others both quantitatively and qualitatively.
    }
    \label{fig:learning-radon2d}
\end{figure*}
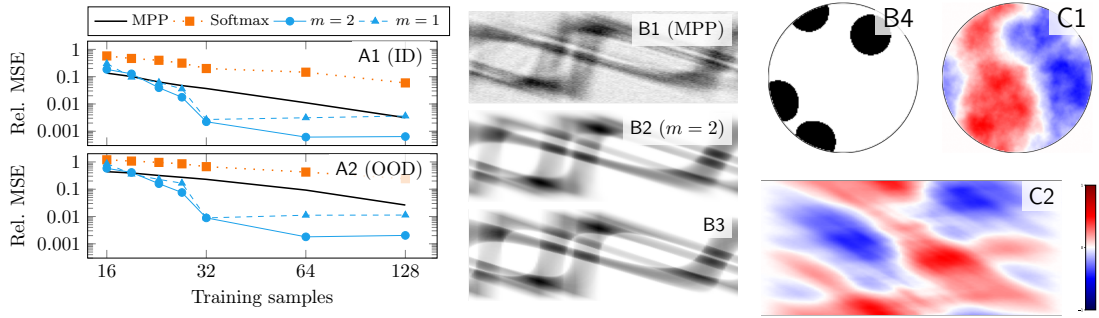

\section{Learning Radon and Ray Transforms}

We demonstrate the advantage of faithfully parameterizing the geometric structure of double fibration transforms in the operator learning problem for the Radon transform~(\cref{example:radon}).
We consider examples $\{S_1u_j,S_2Ru_j\}_{j=1}^J$ where each $u_j$ is drawn independently from a Mat\'{e}rn random field, pictured in \cref{fig:learning-radon2d}~(C).
The operators $S_1,S_2$ sample their respective functions on $32\times 32$ discretizations of the domains $Y,X$.
The levelset integral kernels are trained to minimize the mean-squared error (MSE) relative to the training data.

Levelset integral kernels with $m\in\{1,2\}$ are compared to the Moore-Penrose pseudoinverse (MPP) and the softmax integral kernel (see~\cref{remark:attention}).
Testing is done on independently drawn data from the same distribution as the training data (In-distribution, ID), and on randomly distributed gaussian functions on $X$ (Out-of-distribution, OOD).
As shown in \cref{fig:learning-radon2d}~(A), the levelset methods exhibit superalgebraically decaying test error in the small-sample regime ($J\leq 32$) before the test error saturates due to discretization.
This suggests that the levelset methods are able to match the statistical guarantees of \cref{theorem:data-efficiency}.
Moreover, there is only modest degradation when testing on OOD data.
The MPP is incapable of modeling structures outside of the span of the training set, and the softmax kernel does not have a strong bias towards the correct geometric structure.
In \cref{fig:learning-radon2d}~(B), we examine the estimated transforms on a blob-like image, and note that the levelset parameterization more accurately realizes the geometric structure of the target image.

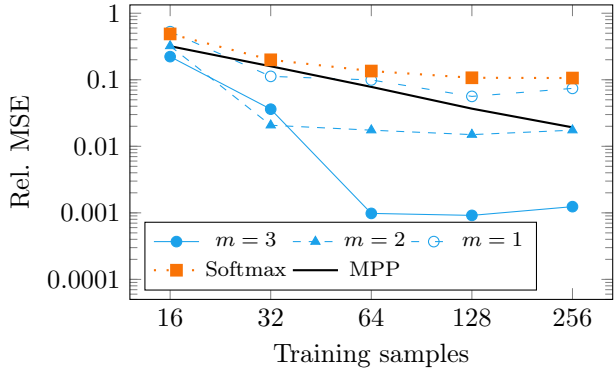
\begin{figure}
    \centering
    \def\datadir{figs/data/fig_ray3d}
\begin{tikzpicture}

\pgfplotsset{cycle list name=threelines}
    
    \begin{axis}[
    width=8cm,
    height=5.5cm,
    xmode=log,
    ymin=5e-5,
    ymode=log,
    log basis x=10,
    log basis y=10,
    xtick={16,32,64,128,256,512},
    log ticks with fixed point,
    xlabel={Training samples},
    ylabel={Rel. MSE},
    legend columns=3,
    legend pos={south west},
    legend style={
        font={\footnotesize},
        cells={align=left},
    },
    ]

    \addplot[color=pastelBlue,mark=*]
    table[col sep=comma,x=num_samples,y=matern_n32] 
    {\datadir/plane_fulldim.csv}; 
    \addlegendentry{$m=3$}

    \addplot[color=pastelBlue,mark=triangle*,dashed,mark options={solid}]
    table[col sep=comma,x=num_samples,y=matern_n32] 
    {\datadir/plane.csv}; 
    \addlegendentry{$m=2$}

    \addplot[color=pastelBlue,mark=o,loosely dashed,mark options={solid}]
    table[col sep=comma,x=num_samples,y=matern_n32] 
    {\datadir/plane_codim.csv}; 
    \addlegendentry{$m=1$}

    \addplot[color=pastelOrange,mark=square*,loosely dotted,thick,mark options={solid}]
    table[col sep=comma,x=num_samples,y=matern_n32] 
    {\datadir/plane_softmax.csv}; 
    \addlegendentry{Softmax}

    \addplot[color=black,mark=none,thick]
    table[col sep=comma,x=num_samples,y=matern_pinv_n32] 
    {\datadir/plane.csv}; 
    \addlegendentry{MPP}

    \end{axis}
\end{tikzpicture}
    \caption{%
    Relative MSE of learned operators for approximating the Euclidean ray transform in $\mathbb{R}^3$, averaged over five training runs.
    Overspecification of the codimension ($m=3$) of the integral transform yields superior performance to more constrained methods, especially when compared to the underspecified codimension ($m=1$).
    }
    \label{fig:ray3d}
\end{figure}

We carry out a similar experiment for the Euclidean ray transform in $\mathbb{R}^3$ as defined in \cref{example:ray}, which is a double fibration transform of codimension $n''=2$.
Using $32\times 32\times 32$ discretizations of the domains $Y,X$, we compare the MPP, softmax, and levelset integral kernel with maximum codimension ranging over $m\in\{1,2,3\}$.
Training and testing is done on independent instances of a Mat\'{e}rn random field.
For $J\in[16,256]$ samples, the overspecified codimension ($m=3$) levelset integral kernel outperforms all other methods, followed by the exactly specified codimension ($m=2$), as shown in \cref{fig:ray3d}.
We see that \emph{underspecifying} the codimension by choosing $m<n''$ yields poor performance, comparable to the softmax integral kernel.
We conjecture that the improvement in performance by overspecifying the codimension is due to extra flexibility in choosing the best coordinate system with which to locally represent the incidence submanifold, which may vary due to the curvature of $Z$ as a submanifold of $Y\times X$.

\subsection{Empirically Determining the Codimension}

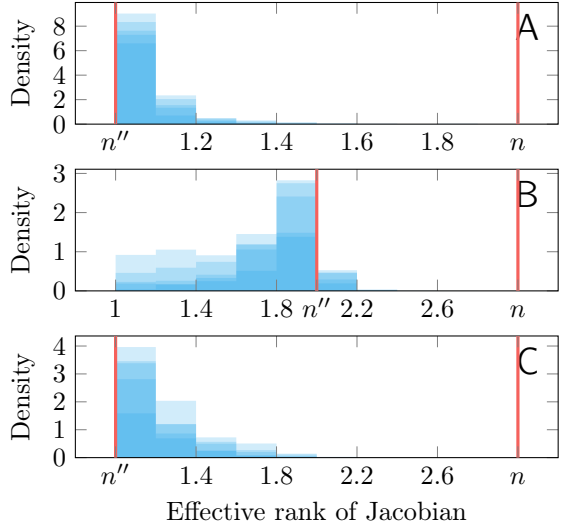
\begin{figure}
    \centering
    \begin{tikzpicture}
    \begin{groupplot}[
    group style={
    group size=1 by 3,
    group name=myGroup,
    vertical sep=0.6cm,
    },
    %
    %
    width=8cm,
    height=3.2cm,
    ymin=0,
    major grid style={very thick, pastelRed},
    axis on top=true,
    ]

    \nextgroupplot[xmin=0.9, xmax=2.1,
    ylabel={Density},
    xtick={1.2, 1.4, 1.6, 1.8},
    extra x ticks={1, 2},
    extra x tick style={%
        text height=height("0"),
        grid=major,
    },
    extra x tick labels={
        $n''$,
        $n$
    }]

    \def\datadir{figs/data/fig_radon2d}
    
    \addplot+[ybar interval, mark=no, draw=none, fill=pastelBlue, fill opacity=0.2] 
    table[col sep=comma,x=effrank,y=density]
    {\datadir/rankhist_fulldim_001.csv};
    \addplot+[ybar interval, mark=no, draw=none, fill=pastelBlue, fill opacity=0.2] 
    table[col sep=comma,x=effrank,y=density]
    {\datadir/rankhist_fulldim_002.csv};
    \addplot+[ybar interval, mark=no, draw=none, fill=pastelBlue, fill opacity=0.2] 
    table[col sep=comma,x=effrank,y=density]
    {\datadir/rankhist_fulldim_003.csv};
    \addplot+[ybar interval, mark=no, draw=none, fill=pastelBlue, fill opacity=0.2] 
    table[col sep=comma,x=effrank,y=density]
    {\datadir/rankhist_fulldim_004.csv};
    \addplot+[ybar interval, mark=no, draw=none, fill=pastelBlue, fill opacity=0.2] 
    table[col sep=comma,x=effrank,y=density]
    {\datadir/rankhist_fulldim_005.csv};
    
    \node [fill=white, fill opacity=0.8, text opacity=1, anchor=north east] at (rel axis cs: 0.98,0.98) {\Large\textsf{A}};

    \nextgroupplot[xmin=0.8, xmax=3.2, 
    ylabel={Density},
    xtick={1, 1.4, 1.8, 2.2, 2.6},
    extra x ticks={2, 3},
    extra x tick style={%
        text height=height("0"),
        grid=major,
    },
    extra x tick labels={
        $n''$,
        $n$
    }]

    \def\datadir{figs/data/fig_ray3d}
    
    \addplot+[ybar interval, mark=no, draw=none, fill=pastelBlue, fill opacity=0.2] 
    table[col sep=comma,x=effrank,y=density]
    {\datadir/rankhist_fulldim_001.csv};
    \addplot+[ybar interval, mark=no, draw=none, fill=pastelBlue, fill opacity=0.2] 
    table[col sep=comma,x=effrank,y=density]
    {\datadir/rankhist_fulldim_002.csv};
    \addplot+[ybar interval, mark=no, draw=none, fill=pastelBlue, fill opacity=0.2] 
    table[col sep=comma,x=effrank,y=density]
    {\datadir/rankhist_fulldim_003.csv};
    \addplot+[ybar interval, mark=no, draw=none, fill=pastelBlue, fill opacity=0.2] 
    table[col sep=comma,x=effrank,y=density]
    {\datadir/rankhist_fulldim_004.csv};
    \addplot+[ybar interval, mark=no, draw=none, fill=pastelBlue, fill opacity=0.2] 
    table[col sep=comma,x=effrank,y=density]
    {\datadir/rankhist_fulldim_005.csv};
    
    \node [fill=white, fill opacity=0.8, text opacity=1, anchor=north east] at (rel axis cs: 0.98,0.98) {\Large\textsf{B}};

    \nextgroupplot[xmin=0.8, xmax=3.2, 
    xlabel={Effective rank of Jacobian},
    ylabel={Density},
    xtick={1.4, 1.8, 2.2, 2.6},
    extra x ticks={1, 3},
    extra x tick style={%
        text height=height("0"),
        grid=major,
    },
    extra x tick labels={
        $n''$,
        $n$
    }]

    \def\datadir{figs/data/fig_radon3d}
    
    \addplot+[ybar interval, mark=no, draw=none, fill=pastelBlue, fill opacity=0.2] 
    table[col sep=comma,x=effrank,y=density]
    {\datadir/rankhist_fulldim_001.csv};
    \addplot+[ybar interval, mark=no, draw=none, fill=pastelBlue, fill opacity=0.2] 
    table[col sep=comma,x=effrank,y=density]
    {\datadir/rankhist_fulldim_002.csv};
    \addplot+[ybar interval, mark=no, draw=none, fill=pastelBlue, fill opacity=0.2] 
    table[col sep=comma,x=effrank,y=density]
    {\datadir/rankhist_fulldim_003.csv};
    \addplot+[ybar interval, mark=no, draw=none, fill=pastelBlue, fill opacity=0.2] 
    table[col sep=comma,x=effrank,y=density]
    {\datadir/rankhist_fulldim_004.csv};
    \addplot+[ybar interval, mark=no, draw=none, fill=pastelBlue, fill opacity=0.2] 
    table[col sep=comma,x=effrank,y=density]
    {\datadir/rankhist_fulldim_005.csv};
    
    \node [fill=white, fill opacity=0.8, text opacity=1, anchor=north east] at (rel axis cs: 0.98,0.98) {\Large\textsf{C}};

    \end{groupplot}
\end{tikzpicture}
    \caption{%
    The effective rank of the Jacobian evaluated at incident points indicates the codimension of the underlying geometric relation.
    Distribution of the effective Jacobian rank for learned approximation to the \textbf{(A)} Radon transform in $\mathbb{R}^2$ from $J=128$ training samples, \textbf{(B)} Euclidean ray transform from $J=256$ training samples, and \textbf{(C)} Radon transform in $\mathbb{R}^3$ from $J=256$ training samples.
    All histograms overlay the empirical distributions over five training runs, with the true underlying codimension $n''$ and the dimension $n$ of the domain marked.
    }
    \label{fig:rankhists}
\end{figure}

As shown in \cref{fig:learning-radon2d,fig:ray3d}, the more flexible models where $m=n$ appear to outperform those with the ``correct'' codimension $m=n''$.
Despite the apparent performance advantages, the regime of $n>n''$ makes it difficult to guarantee that $f$ will satisfy \cref{assump:levelset-gradient} with the correct value of $n''$.
Indeed, \cref{assump:levelset-gradient} holds for generic smooth $f$ only when $m=n''$.

Still, a reasonable estimate of the true underlying geometry is expected to have an \emph{approximate} low-rank structure in accordance with the true underlying geometry.
We evaluate this for the trained approximation of the Radon and ray transforms by examining the spectrum of the Jacobian over the incident points.
To do so, we evaluate the integral kernel over a discretization of $X$.
For $64$ points $y\in Y$ selected at random, we sample $64$ points $x\in X$ such that $|f(y,x)|^2$ is minimized (under \cref{assump:nontrivial-levelset}, the zero-levelset).
We then compute the effective rank of the Jacobian at these points, defined for a matrix $A\in\mathbb{R}^{n\times m}$ as $\effrank(A):=\|A\|_*/\|A\|_2$, that is, the ratio of the nuclear norm to the operator norm.
Observe that $\effrank(A)\in[1,m]$ for $m\leq n$.

We demonstrate the concentration of the effective rank of the Jacobian for the levelset integral kernels trained on the maximal number of samples ($J=128$ for the Radon transform, $J=256$ for the ray transform) in \cref{fig:rankhists}.
Additionally, we train a levelset integral kernel on $J=256$ samples for the Radon transform in $\mathbb{R}^3$, which is a double fibration transform of codimension $n''=1$ defined by integration over planes.
In \cref{fig:rankhists}~(A,C), the effective rank of the Jacobian evaluated at incident points concentrates near one, which is the codimension of the true operators.
Similarly, in \cref{fig:rankhists}~(B), the effective rank concentrates near two.

\section{Application: Riemannian Inverse Problems}

By \cref{lemma:gauss-approximation}, a levelset integral kernel can be inspected to determine the parameters of the geometric integral operator in the limit as $\lambda\to\infty$: namely, the incidence relation $Z\subset Y\times X$ and the amplitude $a(y,x)$.
To demonstrate, we consider the inverse problem of determining an isotropic Riemannian metric given a learned approximation of the corresponding geodesic ray transform~\cite{monard2014,HolmanUhlmann}.

Let $X\subset\mathbb{R}^d$ be a bounded open set with smooth boundary, and denote the inward-pointing cosphere bundle on $X$ by $\partial_{in}S^*X$.
Let $c:X\to\mathbb{R}^{>0}$ be a smooth \emph{wavespeed} on the domain, which defines a Riemannian metric $g(x)=c^{-2}(x)$.
For initial condition $y=(x,\xi)\in \partial_{in}S^*X$, we denote the corresponding geodesic curve by $\gamma_y(t):(0,\tau_y)\to X$, where $\tau_y$ is the exit time of the geodesic.
For simplicity, we assume the wavespeed yields a \emph{nontrapping metric}~\cite{monard2014}, so that $\tau_y<\infty$ for all $y$.
This defines the \emph{geodesic ray transform}
\begin{equation}\label{eq:geodesic-ray-transform}
    Ru(y) = \int_{0}^{\tau_y}u(\gamma_y(t))dt.
\end{equation}
The geodesic ray transform is a double fibration transform~\cite{mazzucchelli2023}, where the incidence relation is given by pairs $(y,x)$ such that $x=\gamma_y(t)$ for some $t\in(0,\tau_y)$.
Thus, the fibers $G_y$ are given by the geodesic curves $\gamma_y((0,\tau_y))$, and the measure $\mu_y(x)$ (and thus, the amplitude $a(y,x)$) is inversely proportional to the wavespeed $c(x)$.

\begin{example}[Lensing Ray Transform]\label{example:lens}
    Let $X$ be the unit disc in $\mathbb{R}^2$, and let $Y=\partial_{in}S^*X$ be the inward-pointing cosphere bundle on $\partial X=S^1$.
    Let $R:\mathcal{D}'(X)\to\mathcal{D}'(Y)$ be the geodesic ray transform for a Riemannian metric determined by a lensing metric~\cite{monard2014}:
    \begin{equation*}
        c(x) = 
        \exp\left(
            -\frac{k}{2}
            \exp\left(
                -\frac{|x|^2}{2\sigma^2}
            \right)
        \right),
    \end{equation*}
    where $k,\sigma>0$ are fixed parameters.
    The geodesics $\gamma_y(t)$ ``lens'' around the origin of the disc $X$, with amplitude $a(y,x)$ that is small near the boundary and large near the origin.
\end{example}

\begin{figure}
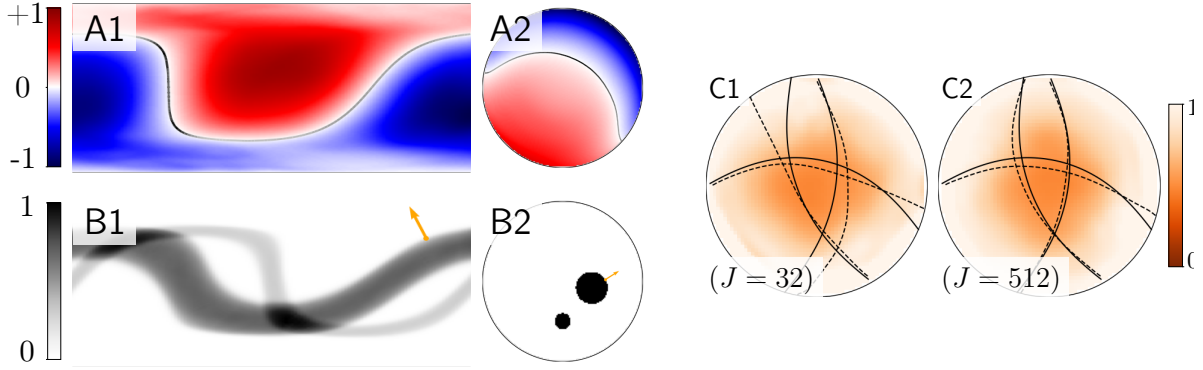

    \centering
    \begin{minipage}[c]{0.54\linewidth}
      \resizebox{\linewidth}{!}{\def\graphicdir{figs/data/fig_lensray2d}

\begin{tikzpicture}
    \begin{groupplot}[
    group style={
    group size=3 by 2,
    group name=myGroup,
    horizontal sep=0.1cm,
    vertical sep=0.3cm,
    },
    tick style={draw=none},
    xticklabel=\empty,
    yticklabel=\empty,
    width=3.5cm,
    height=3.5cm,
    xmin=-1,
    xmax=1,
    ymin=-1,
    ymax=1,
    clip mode=individual
    ]

    \nextgroupplot[width=2cm, axis line style={draw=none}]
    \def\svgwidth{0.35cm}
    \node[/pgfplots/plot graphics/node, anchor=east] at (rel axis cs: 1,0.5) 
    {\import{\graphicdir/512}{logit_bar_pdf.tex}};

    \nextgroupplot[width=6.125cm,xmin=-1.9,xmax=1.9,ymin=-0.9,ymax=0.9]
    \addplot graphics[xmin=-2,xmax=2,ymin=-1,ymax=1] 
    {\graphicdir/512/lx.png};
    \node [fill=white, fill opacity=0.8, text opacity=1, anchor=north west] at (rel axis cs: 0,1) {\large\textsf{A1}};

    \nextgroupplot[axis line style={draw=none}]
    \addplot graphics[xmin=-1,xmax=1,ymin=-1,ymax=1] 
    {\graphicdir/512/ly.png};
    \node [fill=white, fill opacity=0.8, text opacity=1, anchor=north west] at (rel axis cs: 0,1) {\large\textsf{A2}};

    \nextgroupplot[width=2cm, axis line style={draw=none}]
    \def\svgwidth{0.3cm}
    \node[/pgfplots/plot graphics/node, anchor=east] at (rel axis cs: 1,0.5) 
    {\import{\graphicdir/512}{image_bar_pdf.tex}};
     
    \nextgroupplot[width=6.125cm,xmin=-1.9,xmax=1.9,ymin=-0.9,ymax=0.9]
    \addplot graphics[xmin=-2,xmax=2,ymin=-1,ymax=1] 
    {\graphicdir/512/Rf.png};
    \node [fill=white, fill opacity=0.8, text opacity=1, anchor=north west] at (rel axis cs: 0,1) {\large\textsf{B1}};

    \nextgroupplot[axis line style={draw=none}]
    \addplot graphics[xmin=-1,xmax=1,ymin=-1,ymax=1] 
    {\graphicdir/512/f.png};
    \node [fill=white, fill opacity=0.8, text opacity=1, anchor=north west] at (rel axis cs: 0,1) {\large\textsf{B2}};

    \end{groupplot}
\end{tikzpicture}}
    \end{minipage}
    \begin{minipage}[c]{0.43\linewidth}
      \resizebox{\linewidth}{!}{\def\graphicdir{figs/data/fig_lensray2d}

\begin{tikzpicture}
    \begin{groupplot}[
    group style={
    group size=3 by 1,
    group name=myGroup,
    horizontal sep=0.0cm,
    },
    tick style={draw=none},
    xticklabel=\empty,
    yticklabel=\empty,
    axis line style={draw=none},
    height=5cm,
    xmin=-1, xmax=1,
    ymin=-1, ymax=1,
    ]

    \nextgroupplot[width=5cm]
    \addplot graphics[xmin=-1,xmax=1,ymin=-1,ymax=1] 
    {\graphicdir/32/wavespeed.png};
    \node [fill=white, fill opacity=0.8, text opacity=1, anchor=north west] at (rel axis cs: 0,1) {\large\textsf{C1}};
    \node [fill=white, fill opacity=0.8, text opacity=1, anchor=south west] at (rel axis cs: 0,0) {\large$(J=32)$};

    \nextgroupplot[width=5cm]
    \addplot graphics[xmin=-1,xmax=1,ymin=-1,ymax=1] 
    {\graphicdir/512/wavespeed.png};
    \node [fill=white, fill opacity=0.8, text opacity=1, anchor=north west] at (rel axis cs: 0,1) {\large\textsf{C2}};
    \node [fill=white, fill opacity=0.8, text opacity=1, anchor=south west] at (rel axis cs: 0,0) {\large$(J=512)$};

    \nextgroupplot[width=2cm, axis line style={draw=none}]
    \def\svgwidth{0.4cm}
    \node[/pgfplots/plot graphics/node, anchor=west] at (rel axis cs: 0,0.5) 
    {\import{\graphicdir/512}{wavespeed_bar_pdf.tex}};
    
    \end{groupplot}
\end{tikzpicture}}
    \end{minipage}
    \caption{%
    Geometry of a learned geodesic ray transform.
    %
    \textbf{(A)} Levelset function $f(y,x)$ for fixed $x\in X$ \textit{(A1)} and $y\in Y$ \textit{(A2)}.
    The fibers $H_x,G_y$ are shown, with amplitude $a(y,x)$ indicated by the shade (darker$=$larger).
    \textbf{(B1)} Function $L^\lambda u$ with $(y,\eta)\in\wf(Lu)$.
    \textbf{(B2)} Function $u$ with $(x,\xi)\in\wf(f)$, such that $(y,-\eta;x,\xi)\in N^*Z$.
    \textbf{(C1,C2)} Estimated wavespeeds for $J=32,512$, with true geodesic rays (solid) and estimated rays with same initial point $y\in Y$ (dashed).
    }
    \label{fig:canonical-relation}
\end{figure}

\subsection{Propagation of Singularities}

In many settings, the most important features of a function $u$ to reconstruct from measurements $Ru$ are the \emph{singularities}.
Specifically, the wavefront set of $u$, which is related to $\wf(Ru)$ by the conormal bundle of $Z$.

The levelset integral kernel parameterization of a double fibration transform makes it straightforward to compute this relation.
The incidence submanifold $Z$ consists of points $(y,x)$ such that $f(y,x) = 0$.
Since $f$ is differentiable, the conormal space at a given $(y,x)\in Z$ is equal to the row span of the Jacobian of $f(y,x)$, under \cref{assump:levelset-gradient}.
We illustrate this in \cref{fig:canonical-relation}~(A,B) for a levelset integral kernel trained on data generated from the lensing ray transform~(\cref{example:lens}), where the Jacobian of $f(y,x)$ dictates the relationship between $\wf(u)$ and $\wf(Lu)$.

\subsection{Wavespeed Recovery}

We now consider the inverse problem of recovering the wavespeed $c(x)$ from a set of examples $\{u_j,Ru_j\}_{j=1}^J$.
We first train a levelset integral kernel on the data to yield an estimated operator $L$, parameterized by some learned function $f(y,x)$.
Since the amplitude of the true operator is independent of $y$, we estimate the wavespeed at an arbitrarily given $x\in X$ by taking the median value of the amplitude $a_f(y,x)$ for a set of sampled points $(y,x)\in f^{-1}(0)$, yielding an estimate of the true underlying amplitude $\hat{a}(x)$. 
The wavespeed $\hat{c}(x)\sim 1/\hat{a}(x)$ follows from this estimate.
We convolve the estimated wavespeed with a gaussian kernel to smooth out numerical artifacts.
The estimated wavespeed is then used to compute a family of corresponding estimated geodesic curves $\hat{\gamma}_y(t)$.

We illustrate the results of this procedure in \cref{fig:canonical-relation}~(C1,C2), for operators trained on $J=32$ and $J=512$.
The general structure of the wavespeed is correctly recovered in both cases: $\hat{c}(x)$ is largest near the boundary, and smallest near the center.
When the number of samples is small, though, the local errors in the estimated wavespeed compound to yield poor estimates of the geodesics; on the other hand, when the number of samples is large, the estimated geodesic curves are fairly close to those arising from the true wavespeed.
This is despite the fact that the parameterization of the operator incorporates no \textit{a priori} knowledge of the geometric properties of a geodesic ray transform, namely, the existence of an underlying Riemannian metric.

\begin{takeaway}\label{takeaway:fourth}
    Levelset integral kernels naturally represent geometric integral operators, including double fibration transforms.
\end{takeaway}

\section{Conclusion}

We have considered the problem of learning an integral operator $R$ from a set of examples $\{u_j,Ru_j\}_{j=1}^J$, focusing on the case where $R$ is a double fibration transform mapping functions on some domain $X$ to functions on another domain $Y$.
These operators are naturally modeled by levelset methods, from which the incidence relation, amplitude, and even codimension can be estimated.
We have demonstrated the theoretical~(\cref{theorem:data-efficiency}) and practical~(\cref{lemma:gauss-approximation,lemma:universal,lemma:discrete-kernel}) properties of learning these transforms from data, complimenting prior works on operator learning that assume the geometry is known \textit{a priori}.
We foresee learnable levelset integral kernels finding use in modeling more complicated integration phenomena, either via incorporation into deep nonlinear models for solving geometric inverse problems, or by composition with pseudodifferential operators for richer scattering phenomena.

\subsection*{Acknowledgements}
TMR and RGB were partially supported by ONR grant N00014-23-1-2714, ONR MURI N00014-20-1-2787, DOE grant DE-SC0020345, and DOI grant 140D0423C0076.
LT was supported by the Australian Research Council under grants DP260103195 and DP220101808.
ID was supported by European Research Council Starting Grant 852821--SWING.
MVdH gratefully acknowledges the support of the Department of Energy BES program under grant DE-SC0020345, Oxy, the corporate members of the Geo-Mathematical Imaging Group at Rice University and the Simons Foundation under the MATH+X Program.

\medskip

\noindent TMR would like to acknowledge his daughter for her assistance in preparing this manuscript, as well as his wife for her assistance in preparing his daughter.

\appendix

\section{Data-efficient Recovery}\label{sec:recovery}

\subsection{Compressibility of Gabor Representations}\label{sec:recovery:compressible}

Suppose $X$ is a Euclidean domain of dimension $n$, and let $\{g_{\hat x,\hat\xi}\}_{(\hat x,\hat\xi)\in\Lambda}$ be a Gabor system on $X$, where $g$ is a $C_0^{\infty}$ window and $\Lambda\subset\mathbb{R}^{2n}$ is a rectangular lattice, such that $\{g_{\hat x,\hat\xi}\}_{(\hat x,\hat\xi)\in\Lambda}$ is a frame for $L^2(\mathbb{R}^n)$.
We denote by $\gamma$ the canonical dual window, which as a Schwartz function by~\cite[Proposition~5.5]{janssen1994duality}, and satisfies
\begin{equation*}
    f = \sum_{(\hat x, \hat \xi)\in\Lambda_X}\langle f, \gamma_{\hat x, \hat \xi}\rangle g_{\hat x, \hat \xi}.
\end{equation*}
for all $f\in L^2(\mathbb{R}^n)$, and in particular $f\in L^2(X)$.
Let $h\in C_0^{\infty}(\mathbb{R}^N)$ denote a window function that is separable as a tensor product $h(y)=\prod_{j=1}^{N}h(y_j)$.

\begin{lemma}
\label{lem: b bound for integral}
Suppose the operator $R:\mathcal{D}'(Y)\to\mathcal{D}'(X)$ has local representation given by the Schwartz kernel
\begin{eqnarray}
    \label{eq: oscillatory kernel}
R(y,x) = a(y,x)\int_{\mathbb R^{n''}} e^{i(b(x,y') - y'')\cdot \zeta''} d\zeta''.
\end{eqnarray}
Then, for any $\ell \in \mathbb N$,
\begin{equation}
    \label{eq: b bound for integral}
    \left|\langle h_{\hat y,\hat \eta}, R g_{\hat x, \hat \xi} \rangle \right| \leq C_{\ell} \llangle  \left(b(\hat x,\hat y'), b_x(\hat x,\hat y')^T \hat \eta'', b_{\hat y}(\hat x,\hat y')^T \hat \eta''\right) - \left(\hat y'', -\hat \xi, -\hat \eta'\right) \rrangle^{-\ell}
\end{equation}
\end{lemma}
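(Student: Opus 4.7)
The plan is to expand the pairing as a triple oscillatory integral in $(y,x,\zeta'')$ and extract polynomial decay by iterated non-stationary phase. Substituting the Gabor atoms and \eqref{eq: oscillatory kernel} gives
$$\langle h_{\hat y,\hat\eta}, Rg_{\hat x,\hat\xi}\rangle = \iiint e^{i\Phi}\, a(y,x)\,\overline{h(y-\hat y)}\,g(x-\hat x)\,d\zeta''\,dx\,dy,$$
with phase
$$\Phi(y,x,\zeta'') = (b(x,y')-y'')\cdot\zeta'' + 2\pi\hat\xi\cdot(x-\hat x) - 2\pi\hat\eta\cdot(y-\hat y).$$
Setting the partials of $\Phi$ with respect to $(y'',\zeta'',y',x)$ to zero pins $\zeta''=-2\pi\hat\eta''$, $y''=b(x,y')$, $b_{y'}^T\hat\eta''+\hat\eta'=0$, and $b_x^T\hat\eta''=\hat\xi$; localizing at the base points $(\hat x,\hat y')$ (permissible because $g$ and $h'$ have small supports) reproduces, up to a sign convention in $\hat\xi$, exactly the vector whose bracket appears on the right-hand side of \eqref{eq: b bound for integral}. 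This makes it clear that iterated non-stationary phase in $(\zeta'',x,y')$ should yield precisely the claimed decay.

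First exploit the tensor-product structure $h(y-\hat y)=h'(y'-\hat y')h''(y''-\hat y'')$ and carry out the $y''$-integration explicitly. It is a Fourier transform of the Schwartz function $h''$ at $\zeta''+2\pi\hat\eta''$ and produces a rapidly decaying envelope $\widehat{h''}(\zeta''+2\pi\hat\eta'')$ together with a residual phase $e^{-i\hat y''\cdot\zeta''}$. The remaining integral over $(y',x,\zeta'')$ has reduced phase
$$\tilde\Phi = (b(x,y')-\hat y'')\cdot\zeta'' + 2\pi\hat\xi\cdot(x-\hat x) - 2\pi\hat\eta'\cdot(y'-\hat y'),$$
with gradients
$$\nabla_{\zeta''}\tilde\Phi = b(x,y')-\hat y'',\quad \nabla_x\tilde\Phi = b_x(x,y')^T\zeta''+2\pi\hat\xi,\quad \nabla_{y'}\tilde\Phi = b_{y'}(x,y')^T\zeta''-2\pi\hat\eta'.$$
Once $\zeta''$ is localized near $-2\pi\hat\eta''$ by the Schwartz envelope and $(x,y')$ near $(\hat x,\hat y')$ by the window supports, Taylor expansion of $b, b_x, b_{y'}$ shows that $|\nabla\tilde\Phi|$ is comparable, up to smooth uniformly bounded multiplicative corrections, to the bracketed quantity in \eqref{eq: b bound for integral}. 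Now apply the standard non-stationary phase operator
$$M = \frac{1}{i|\nabla\tilde\Phi|^2}\bigl(\nabla_{\zeta''}\tilde\Phi\cdot\nabla_{\zeta''}+\nabla_x\tilde\Phi\cdot\nabla_x+\nabla_{y'}\tilde\Phi\cdot\nabla_{y'}\bigr),$$
which satisfies $Me^{i\tilde\Phi}=e^{i\tilde\Phi}$, and integrate by parts $\ell$ times; each iteration transfers a factor of $|\nabla\tilde\Phi|^{-1}$ onto the amplitude and produces the desired $\llangle\cdots\rrangle^{-\ell}$ decay.

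The main obstacle is that the $\zeta''$-integral is only conditionally convergent, and that for $|\zeta''|$ large the gradients $\nabla_x\tilde\Phi,\nabla_{y'}\tilde\Phi$ scale linearly in $|\zeta''|$ rather than being controlled by the bracketed quantity alone. Both are handled by a dyadic decomposition of $\zeta''$-space centered at $-2\pi\hat\eta''$. On the inner piece $|\zeta''+2\pi\hat\eta''|\lesssim |\hat\eta''|$, replacing $\zeta''$ by $-2\pi\hat\eta''$ in the gradients costs only smooth bounded errors and $\ell$-fold IBP through $M^t$ produces the claimed factor directly. On the outer pieces, the arbitrary polynomial decay of $\widehat{h''}(\zeta''+2\pi\hat\eta'')$ dominates the polynomial growth of the gradients and of all derivatives of $a,h',g$ that $M^t$ generates, and the dyadic sum converges. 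Tracking derivatives through the $\ell$ iterations yields a constant $C_\ell$ depending only on $\ell$, on finitely many derivatives of $b$ in a fixed neighborhood of $(\hat x,\hat y')$, and on the windows and amplitude.
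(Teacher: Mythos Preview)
Your proposal follows the same skeleton as the paper's proof: write the pairing as an oscillatory integral, use the tensorial structure $h=h'\otimes h''$ to carry out the $y''$-integration as a Fourier transform (producing a Schwartz envelope in $\zeta''$ centered at $-2\pi\hat\eta''$), and then extract the decay by repeated integration by parts in the remaining variables $(x,y',\zeta'')$. The difference is only in how the integration by parts is organized. The paper Taylor-expands $b(x+\hat x,y'+\hat y')$ about $(\hat x,\hat y')$ so that the phase splits into a part \emph{linear} in $(x,y',\zeta'')$---whose gradient is exactly the bracketed vector---plus a quadratic remainder, and then invokes the identity
\[
e^{i\,\mathrm{linear}} \;=\; \llangle\cdots\rrangle^{-2\ell}\,(1-\Delta_{x,y',\zeta''})^{\ell}\,e^{i\,\mathrm{linear}},
\]
shifting $(1-\Delta)^\ell$ onto the amplitude (with the details of controlling the remainder deferred to \cite{cordero2007time}). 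This produces the Japanese bracket directly, with no case distinction and no dyadic machinery.

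Your route via the first-order operator $M=|\nabla\tilde\Phi|^{-2}\,\nabla\tilde\Phi\cdot\nabla$ is viable but has two loose ends. First, $M$ is undefined wherever $\nabla\tilde\Phi=0$, which happens precisely near the canonical relation where the bracketed vector vanishes; in that regime the claimed bound is just the trivial $O(1)$ estimate coming from $L^2$-boundedness of $R$, but you must say so explicitly, since otherwise the iterated $(M^t)^\ell$ is not even defined on the integrand. Second, your inner-piece cutoff $|\zeta''+2\pi\hat\eta''|\lesssim|\hat\eta''|$ is too wide for the sentence that follows it: replacing $\zeta''$ by $-2\pi\hat\eta''$ in $\nabla_x\tilde\Phi=b_x^T\zeta''+2\pi\hat\xi$ then incurs an error of order $|b_x|\cdot|\hat\eta''|$, not $O(1)$, so comparability of $|\nabla\tilde\Phi|$ to the fixed bracket can fail badly. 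The Schwartz envelope $\widehat{h''}$ already forces $|\zeta''+2\pi\hat\eta''|=O(1)$ modulo rapidly decaying tails, and with that as your inner region (together with the trivial-case split above) your argument completes and delivers the same estimate as the paper's.
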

\begin{proof}
Observe that the oscillatory integral given in \eqref{eq: oscillatory kernel} is supported only on the set $\{y'' = b(x,y')\}$ so we may replace the weight $a(y,x)$ in \eqref{eq: oscillatory kernel} by $a(y', b(y'), x) := \tilde a(y', x)$.

We compute
{\small
\begin{equation*}
\int_Y \int_{\mathbb R^{n''}} e^{i(b(x,y') - y'') \cdot \zeta''}\tilde a(y', x) h_{\hat y , \hat \eta}(y) dy d\zeta'' =  
\int_{Y}\int_{\mathbb R^{n''}} e^{i(b(x,y'+\hat y')\cdot \zeta''}   e^{iy'' \cdot (\hat\eta'' - \zeta'')} e^{-i\hat y'' \cdot \zeta''} e^{iy' \cdot \hat \eta'}\tilde a(y',x) h(y) dy'' dy' d\zeta''
\end{equation*}
}
Using the tensorial structure of the window function $h(y) = h(y') h(y'')$, see that the $y''$ integral is simply the Fourier transform in the $y''$ variable.
Thus,
{\footnotesize
\begin{equation*}
\int_Y \int_{\mathbb R^{n''}} e^{i(b(x,y') - y'') \cdot \zeta''} h_{\hat y , \hat \eta}(y)\tilde a(y',x) dy d\zeta'' =  \int_{y'\in \mathbb R^{N-n''}} \int_{\zeta'' \in \mathbb R^{n''}} e^{i(b(x,y'+\hat y')\cdot \zeta''}   e^{-i\hat y'' \cdot \zeta''} e^{iy' \cdot \hat \eta'} h(y')\hat h(\zeta'' - \hat\eta'')  \tilde a(y',x)dy' d\zeta''.
\end{equation*}
}
Using the above expression, represent $\langle h_{\hat y, \hat \eta}, R g_{\hat x, \hat \xi}\rangle$ as
\begin{equation}\label{eq: inner product rep}
\left |\langle h_{\hat y, \hat \eta}, R g_{\hat x, \hat \xi}\rangle \right| =
\left| \int_X \int_{y'\in \mathbb R^{N-n''}} \int_{\zeta'' \in \mathbb R^{n''}}e^{i(b(x+\hat x,y'+\hat y')\cdot (\zeta''+\hat \eta'')}   e^{-i\hat y'' \cdot \zeta''} e^{iy' \cdot \hat \eta'} h(y')\hat h(\zeta'') e^{i \hat \xi\cdot x}  g(x)\tilde a(y',x)dy' d\zeta''dx\right|\end{equation}
Take the Taylor expansion
\begin{equation*}
b(x,y'+\hat y') = b(\hat x,\hat y')  + b_x(\hat x, \hat y') x + b_{y'}(x,\hat y') y' + b_2(x,y)
\end{equation*}
and insert this into \eqref{eq: inner product rep}, yielding
\begin{equation}\label{eq:taylor-expanded-form}
\begin{aligned}
\left |\langle h_{\hat y, \hat \eta}, R g_{\hat x, \hat \xi}\rangle \right|
&= \Big| \iiint e^{i(b(\hat x,\hat y') - \hat y '')\cdot \zeta'' + i x \cdot (b_x(\hat x, \hat y')^T\hat \eta'' + \hat \xi)+i y' \cdot( b_{y'}(\hat x, \hat y')^T\hat\eta '' + \hat \eta')} \\
&\phantom{=\Big|\iiint}\ \times e^{ib_2(x,y) \cdot (\zeta'' + \hat \eta'') + i (b_x(\hat x, \hat y') x )\cdot \zeta'' + i(b_{y'} (\hat x, \hat y') y' )\cdot \zeta''} h(y')g(x) \hat h(\zeta'')\tilde a(y',x) \Big|.
\end{aligned}
\end{equation}
Observe that 
{\footnotesize
\begin{equation*}
e^{i(b(\hat x,\hat y') - \hat y '')\cdot \zeta'' +i x \cdot (b_x(\hat x, \hat y')^T\hat \eta'' + \hat \xi)+i y' \cdot( b_{y'}(\hat x, \hat y')^T\hat\eta '' + \hat \eta')} = \frac{(1+ \Delta_{x,y', \zeta''})^\ell e^{i(b(\hat x,\hat y') - \hat y '')\cdot \zeta'' +i x \cdot (b_x(\hat x, \hat y')^T\hat \eta'' + \hat \xi)+i y' \cdot( b_{y'}(\hat x, \hat y')^T\hat\eta '' + \hat \eta')}}{\llangle  \left(b(\hat x,\hat y'), b_x(\hat x,\hat y')^T \hat \eta'', b_{\hat y}(\hat x,\hat y')^T \hat \eta''\right) - \left(\hat y'', -\hat \xi, -\hat \eta'\right) \rrangle^\ell}.
\end{equation*}
}
After inserting this expression into \eqref{eq:taylor-expanded-form}, integration by parts (see the proof of \cite[Theorem~3.1]{cordero2007time} for reference) yields the desired estimate.
\end{proof}

\Cref{lem: b bound for integral} indicates that $R$ is well-organized when represented using Gabor atoms.
For any $r>1$, we want to show that for each Gabor atom $h_{\hat y,\hat \eta}$ on $Y$, we have the following bound on the order statistics of $R^*h_{\hat y,\hat \eta}$ when represented using the Gabor system on $X$:
\begin{equation}\label{eq:desired-powerlaw-decay}
    \big|\langle R^*h_{\hat y,\hat \eta}, g_{m,n}\rangle_{(\hat x, \hat \xi)\in\Lambda}\big|_{(k)}
    \lesssim k^{-r},\quad k\geq 1.
\end{equation}
Equivalently, for some suitable $C>0$ independent of $(\hat y,\hat\eta)$,
\begin{equation}\label{eq:weak-sequence-norm}
    \#\{(\hat x, \hat \xi)\in\Lambda:
    |\langle R^*h_{\hat y,\hat \eta}, g_{\hat x, \hat \xi}\rangle| > Ck^{-r}\} \leq k,
    \quad k\geq 1.
\end{equation}
\begin{lemma}\label{lem: weak sequence norm holds}
The estimate \eqref{eq:weak-sequence-norm} holds.
\end{lemma}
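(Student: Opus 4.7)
The plan is to combine the pointwise bound from Lemma \ref{lem: b bound for integral} with a volumetric count of lattice points that is made polynomial via the Bolker condition. Fix $(\hat y, \hat\eta)$ and let
$$\Phi(\hat x, \hat\xi) := \left(b(\hat x,\hat y') - \hat y'',\ \hat\xi + b_x(\hat x,\hat y')^T \hat \eta'',\ \hat\eta' + b_{y'}(\hat x,\hat y')^T \hat \eta''\right) \in \mathbb R^{N+n},$$
so that Lemma \ref{lem: b bound for integral} reads $|\langle h_{\hat y,\hat\eta}, R g_{\hat x, \hat\xi}\rangle| \leq C_\ell \llangle \Phi(\hat x,\hat\xi)\rrangle^{-\ell}$ for every $\ell\in\mathbb N$.

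First, I would prove a counting lemma:
$$N(M) := \#\{(\hat x,\hat\xi)\in\Lambda : \llangle \Phi(\hat x,\hat\xi)\rrangle\leq M\} \lesssim (1+M)^{2n},$$
with constant uniform in $(\hat y,\hat\eta)$. The middle block of $\Phi$ is affine in $\hat\xi$ with identity derivative, so for each fixed $\hat x$ the number of admissible $\hat\xi$ is $\lesssim (1+M)^n$. The remaining $N$ constraints involve $\hat x$ only. I would read Definition \ref{defn:bolker} in the local parameterization $(x,y',\eta'')\mapsto(y', b(x,y'), -b_{y'}(x,y')^T\eta'',\eta'')$ of $N^*Z'$ as saying that for each fixed $(\hat y',\hat\eta'')$ with $\hat\eta''\neq 0$ the map $\hat x\mapsto(b(\hat x,\hat y'), b_{y'}(\hat x,\hat y')^T\hat\eta'')$ is an immersion from $X\subset\mathbb R^n$ into $\mathbb R^N$. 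Using positive homogeneity in $\hat\eta''$ to reduce to $|\hat\eta''|=1$, together with compactness of $Y$ and $X$, yields a uniform lower bound on the smallest singular value of this Jacobian. The admissible set of $\hat x$ is then confined to a Euclidean ball of radius $\lesssim M$, hence contains $\lesssim (1+M)^n$ lattice points, and multiplying with the previous bound gives the claim.

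Second, the counting bound delivers the weak sequence estimate directly. For any $t>0$, the set $\{(\hat x,\hat\xi) : |\langle h_{\hat y,\hat\eta}, R g_{\hat x, \hat\xi}\rangle| > t\}$ is contained in $\{\llangle\Phi\rrangle \leq (C_\ell/t)^{1/\ell}\}$, and so has cardinality at most $C_0(1 + (C_\ell/t)^{1/\ell})^{2n}$. Setting $t = Ck^{-r}$ with $C$ absorbing the prefactors and choosing $\ell \geq 2nr$ forces this upper bound to be at most $k$, which is exactly \eqref{eq:weak-sequence-norm} and equivalently \eqref{eq:desired-powerlaw-decay}. Since $r > 1$ is arbitrary, the statement holds for every such $r$.

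The main obstacle is the uniform-in-$(\hat y,\hat\eta)$ nondegeneracy of the Jacobian used in the counting step. Small $|\hat\eta''|$ must be handled by noting that $b$ itself has rank $n''$ derivative in $\hat x$, which follows from $p: Z\to Y$ being a fibration, while large $|\hat\eta''|$ is handled by positive homogeneity; the two regimes then combine through compactness of the unit sphere in $\hat\eta''$ and of $Y, X$. This is exactly where the Bolker hypothesis is essential, since without injectivity of $\pi_Y$ there is no reason the constraint map should be bounded below and the polynomial count $M^{2n}$ would fail.
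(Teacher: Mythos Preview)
Your overall strategy---reduce to a sublevel-set count via Lemma~\ref{lem: b bound for integral} and then choose $\ell$ large enough---matches the paper's. Where you diverge is in the spatial count. The paper does not invoke Bolker here at all: since $X$ is compact and $g$ has compact support, only finitely many lattice positions $\hat x$ can have $\supp(g_{\hat x,0})$ meeting a neighborhood of the fiber $G_{\hat y}$, and this number is bounded by a constant $C_0$ that is uniform in $(\hat y,\hat\eta)$ and in $k$. All of the growth therefore comes from the frequency variable alone: for each of the $\leq C_0$ admissible $\hat x$ one counts $\#\{\hat\xi:\|b_x(\hat x,\hat y')^\top\hat\eta''+\hat\xi\|<k^{1/n}\}\lesssim k$, and $\ell=\lceil nr\rceil$ already suffices. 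Your route yields the weaker count $(1+M)^{2n}$ and compensates with $\ell\geq 2nr$, which is fine but loses a factor in the final constants.

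The substantive weak spot in your argument is the small-$|\hat\eta''|$ regime. When $\hat\eta''\to 0$ the map $\hat x\mapsto\bigl(b(\hat x,\hat y'),\,b_{y'}(\hat x,\hat y')^\top\hat\eta''\bigr)$ drops to rank at most $n''$, and your proposed remedy (``$b$ itself has rank-$n''$ derivative in $\hat x$'') cannot confine $\hat x$ to an $n$-dimensional ball of radius $\lesssim M$ when $n''<n$, which is the generic situation for a double fibration transform. The gap is closed precisely by the compactness observation above, after which your immersion step becomes superfluous. In particular your closing claim that ``this is exactly where the Bolker hypothesis is essential'' is not borne out for this lemma: compactness of the fibers does all the work here, and the Bolker condition enters only later, in the proof of Theorem~\ref{theorem:data-efficiency}, where it is used to make $R^*R$ a pseudodifferential operator.
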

\begin{proof}
First, observe that by changing the constant $C>0$ in \eqref{eq:weak-sequence-norm}, it suffices to prove
\begin{equation}\label{eq:weak-sequence-norm-2}
    \#\{(\hat x, \hat \xi)\in\Lambda:
    |\langle R^*h_{\hat y,\hat \eta}, g_{\hat x, \hat \xi}\rangle| > Ck^{-r}\} \leq C' k,
    \quad k\geq 1.
\end{equation}
for some $C'$ independent of $(\hat y, \hat \eta)$.

By choosing $\ell = \lceil n r\rceil$ in \cref{lem: b bound for integral}, we have the following inclusion:
{\footnotesize
\begin{eqnarray}
&&\{(\hat x, \hat \xi)\in\Lambda:
    |\langle R^*h_{\hat y,\hat \eta}, g_{\hat x, \hat \xi}\rangle| > Ck^{-r}\} \subset \\ 
    &&\nonumber \left\{(\hat x, \hat \xi)\in \Lambda:  
    (p\circ q^{-1})(\supp(g_{\hat x,0}))\cap\supp(h_{\hat y,0})\neq\emptyset,
    \|  \left(b(\hat x,\hat y'), b_x(\hat x,\hat y')^\top \hat \eta'', b_{ y'}(\hat x,\hat y')^\top \hat \eta''\right) - \left(\hat y'', -\hat\xi, -\hat \eta'\right) \|^2 
    < \left(\frac{C_\ell}{C}\right)^{2/\ell} k^{2r/\ell}\right\}
\end{eqnarray}
}
Therefore, to show \eqref{eq:weak-sequence-norm-2}, it suffices to show that
{\scriptsize
\begin{eqnarray}\label{eq: upper bound on set}
\#\left\{(\hat x,\hat\xi)\in \Lambda:  (p\circ q^{-1})(\supp(g_{\hat x,0}))\cap\supp(h_{\hat y,0})\neq\emptyset,  
\|  \left(b(\hat x,\hat y'), b_x(\hat x,\hat y')^\top \hat \eta'', b_{ y'}(\hat x,\hat y')^\top \hat \eta''\right) - \left(\hat y'', -\hat\xi, -\hat \eta'\right) \|^2 
    < \left(\frac{C_\ell}{C}\right)^{2/\ell} k^{2r/\ell}\right\}\leq C'k
\end{eqnarray}
}
for $k \in \mathbb N$.
We will now establish that choosing $C\approx C_\ell, \ell=\lceil n r\rceil$ yields~\eqref{eq: upper bound on set}.
For $(y,\eta)\in T^*Y$ with $\|\eta\|\leq 1$ and $s>0$, define the following sets:
\begin{gather*}
    A_1(y) =
    \{ x\in F(\Lambda)) : 
    \supp(Rg_{x,0})\cap\supp(h_{y,0})\neq\emptyset \}, \\
    A_2(y,\eta;s) =
    \{ x\in A_1(y) : 
    \| b_{y'}(x,y')^\top \eta'' + \eta' \| < s \},
\end{gather*}
%
where $F:T^*X\to X$ is the map that ``forgets'' the cotangent vector.
Assume that the window function generating the Gabor system on $X$ has sufficiently small support.
By the compactness of the fibers $G_y$, $\# A_1(y)$ is uniformly bounded over $y\in Y$.
Since $A_2(y,\eta;s)\subset A_1(y)$, for all such $(y,\eta)$ and $s>0$, we have
\begin{equation}\label{eq:cone-count-one}
    \# A_2(y,\eta;s) \leq C_0
\end{equation}
for some constant $C_0>0$ independent of $(y,\eta)$ and $s>0$.

Now let $(\hat y, \hat \eta) \in T^*Y \setminus 0$ and $k\in \mathbb N$.
Then, by definition of $A_2(y,\eta; s)$, we have that 

{\scriptsize
\begin{eqnarray}
\nonumber && \left\{(\hat x,\hat\xi)\in \Lambda:  (p\circ q^{-1})(\supp(g_{\hat x,0}))\cap\supp(h_{\hat y,0})\neq\emptyset,  \|  \left(b(\hat x,\hat y'), b_x(\hat x,\hat y')^\top \hat \eta'', b_{ y'}(\hat x,\hat y')^\top \hat \eta''\right) - \left(\hat y'', -\hat\xi, -\hat \eta'\right) \|^2 
    < 
    \left(\frac{C_\ell}{C}\right)^{2/\ell} k^{2r/\ell}\right\}\\
    &&\subset\left \{ (\hat x,\hat\xi)\in \Lambda : \hat x\in A_2\left(\hat y, \frac{\hat \eta}{|\hat \eta|}; \frac{k^{1/n}}{|\hat\eta|}\right), \left\|b_x(\hat x,\hat y')^\top \frac{\hat \eta''}{|\hat \eta|} + \frac{\hat\xi}{|\hat \eta|}\right\|< \frac{k^{1/n}}{|\hat\eta|}\right\}.
\end{eqnarray}
}
So to show \eqref{eq: upper bound on set}, it suffices to show
\begin{eqnarray}
    \label{eq: rougher upper bound}
    \#\left \{ (\hat x,\hat\xi)\in \Lambda : \hat x\in A_2\left(\hat y, \frac{\hat \eta}{|\hat \eta|}; \frac{k^{1/n}}{|\hat\eta|}\right), \left\|b_x(\hat x,\hat y')^\top {\hat \eta''} + {\hat\xi}\right\|< {k^{1/n}}\right\}<C'k
\end{eqnarray}
Note that there are only finitely many $\hat x\in A_2(y,\hat\eta/|\hat \eta|;k^{1/n}/|\hat \eta|)$ by \eqref{eq:cone-count-one}. 
Additionally, for each such $\hat x\in A_2(y,\hat\eta/|\hat \eta|;k^{1/n}/|\hat \eta|)$, we have that
\begin{eqnarray}
\{\hat \xi \mid \left\| b_x(\hat x, \hat y')^\top \hat \eta'' + \hat \xi\right\| < k^{1/n}\} < Ck    
\end{eqnarray}
with the constant $C'>0$ independent of $\hat x$, $\hat y$, and $\hat \eta$. 
This is due to the fact that we are simply counting the number of $\hat \xi$ within a radius of $k^{1/n}$ of $b_x(\hat x, \hat y')^\top \hat \eta''$, which is independent of $b_x(\hat x, \hat y')^\top \hat \eta''$ when we use a rectangular lattice.
We have therefore that \eqref{eq: rougher upper bound} holds and the proof is complete.
\end{proof}

\subsection{Compressive Recovery}\label{sec:recovery:compressive}

The compressibility result above holds uniformly for $(\hat y,\hat \eta)\in T^*Y$: that is, we do not need to restrict it to a lattice contained in $T^*Y$.
Thus, for any given Gabor atom $h_{\hat y,\hat \eta}$ on $Y$, we can recover $R^*h_{\hat y,\hat \eta}$ up to some resolution from only a few inner products $\{\langle R^*h_{\hat y,\hat \eta}, u_j\rangle\}_{j=1}^J$, where each $u_j$ is a linear combination of Gabor atoms on $X$.
Let $\Omega\subset\Lambda$ be a finite subset of the lattice indexing the Gabor frame for functions on $X$, and consider the space of functions $\vspan\{g_{\hat x,\hat\xi}\}_{(\hat x,\hat\xi)\in\Omega}$ with norm
\begin{equation*}
    \|u\|^2_{\Omega} := 
    \sum_{\omega\in\Omega}|\langle u, \gamma_\omega\rangle|^2,
\end{equation*}
that is, the $\ell^2(\Omega)$ norm of $u$ as a weighted sum of Gabor atoms indexed by $\Omega$, with coefficients computed using the dual frame $\{\gamma_{\omega}\}_{\omega\in\Omega}$.
Call this space $L^2_\Omega(X)$, noting that it is a finite-dimensional subset of $L^2(X)$.

We prove the following result on approximating $R$ over $L^2_{\Omega}(X)$ as a precursor to proving the data-efficiency theorem.
\begin{proposition}\label{prop:pre-data-efficiency}
    Let $R:\mathcal{D}'(X)\to\mathcal{D}'(Y)$ be a double fibration transform satisfying the Bolker condition, and let $\epsilon>0$ be sufficiently small.
    Then, for all $r>1$, there exists a randomized algorithm yielding an estimate $\widehat{R}$ of $R$ from $J=(\frac{C_r}{\epsilon})^{1/r}\log|\Omega|$ input-output examples $\{u_j,Ru_j\}_{j=1}^J$ such that
    \begin{equation*}
        \big|\langle h_{\hat y,\hat\eta}, (\widehat{R}-R)u\rangle\big|<\epsilon\|u\|_{L^2(\Omega)}
    \end{equation*}
    simultaneously for all $(\hat y,\hat\eta)\in T^*Y$ and $u\in L_{\Omega}^2(X)$ with high probability.
\end{proposition}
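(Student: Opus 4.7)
The plan is to reduce the operator-approximation problem to a family of finite-dimensional compressive sensing problems, one per Gabor atom on $Y$, and to exploit uniform (RIP-based) recovery from Gaussian random measurements so that a single draw of training inputs works simultaneously for every $(\hat y,\hat\eta)\in T^*Y$. For each $(\hat y,\hat\eta)$, define the ``matrix row'' $v_{\hat y,\hat\eta}\in\mathbb{R}^{\Omega}$ by $v_{\hat y,\hat\eta}(\omega)=\langle h_{\hat y,\hat\eta},Rg_{\omega}\rangle$; then for any $u=\sum_{\omega}c_{\omega}g_{\omega}\in L^2_{\Omega}(X)$ we have $\langle h_{\hat y,\hat\eta},Ru\rangle=\langle v_{\hat y,\hat\eta},c\rangle$ and $\|u\|_{L^2(\Omega)}=\|c\|_{2}$. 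By \Cref{lem: weak sequence norm holds}, the sorted magnitudes satisfy $|v_{\hat y,\hat\eta}|_{(k)}\leq C_{r}k^{-r}$ for every $r>1$ with $C_{r}$ independent of $(\hat y,\hat\eta)$, so each row is compressible in the Gabor basis.

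Next, draw training inputs as bandlimited Gaussians $u_{j}=\sum_{\omega\in\Omega}c_{j,\omega}g_{\omega}$ with $c_{j,\omega}$ i.i.d.\ $\mathcal{N}(0,1)$. The resulting measurement matrix $\Phi\in\mathbb{R}^{J\times|\Omega|}$ is the standard Gaussian sensing matrix, and crucially it depends only on the training data---not on $(\hat y,\hat\eta)$. With $J\gtrsim s\log(|\Omega|/s)$, standard CS theory~\cite{candes2006stable,baraniuk2007compressive} guarantees that $\Phi$ satisfies the RIP of order $2s$ with high probability. Feeding the measurements $\langle c_{j},v_{\hat y,\hat\eta}\rangle=\langle h_{\hat y,\hat\eta},Ru_{j}\rangle$ into basis pursuit then returns $\widehat{v}_{\hat y,\hat\eta}$ with $\|\widehat{v}_{\hat y,\hat\eta}-v_{\hat y,\hat\eta}\|_{2}\lesssim\sigma_{s}(v_{\hat y,\hat\eta})_{1}/\sqrt{s}$, simultaneously for every compressible target. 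Applying the compressibility bound at exponent $r+\tfrac{1}{2}$ gives $\sigma_{s}(v_{\hat y,\hat\eta})_{1}\lesssim s^{1/2-r}$, hence the recovery error is at most a constant times $s^{-r}$. Choosing $s\asymp(C_{r}/\epsilon)^{1/r}$ delivers $\epsilon$ error with $J\asymp(C_{r}/\epsilon)^{1/r}\log|\Omega|$ samples. Define $\widehat{R}$ implicitly by $\langle h_{\hat y,\hat\eta},\widehat{R}u\rangle:=\langle\widehat{v}_{\hat y,\hat\eta},c\rangle$; Cauchy--Schwarz closes the argument via
\[
    \bigl|\langle h_{\hat y,\hat\eta},(\widehat{R}-R)u\rangle\bigr|\leq\|\widehat{v}_{\hat y,\hat\eta}-v_{\hat y,\hat\eta}\|_{2}\,\|c\|_{2}\leq\epsilon\,\|u\|_{L^2(\Omega)}.
\]

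The main obstacle is handling the supremum over the uncountable index set $(\hat y,\hat\eta)\in T^*Y$: a union bound is unavailable, and discretizing phase space would introduce additional log-factors that depend on $R$. This is precisely why the argument must rely on an RIP-based recovery guarantee rather than a nonuniform one---RIP is a property of the random matrix $\Phi$ alone, so once it holds, basis pursuit succeeds for \emph{every} compressible vector, including all of $\{v_{\hat y,\hat\eta}\}_{(\hat y,\hat\eta)\in T^*Y}$ at once. A secondary bookkeeping step is to confirm that the constant $C_{r}$ in \Cref{lem: weak sequence norm holds} is uniform in $(\hat y,\hat\eta)$, which is immediate from the proof of that lemma: the constants there depend only on $\ell$ and on the symbol of $R$, not on the base point in phase space.
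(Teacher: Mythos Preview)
Your proposal is correct and follows essentially the same approach as the paper's proof: both reduce the problem to compressive recovery of the Gabor-coefficient rows $v_{\hat y,\hat\eta}$ via i.i.d.\ Gaussian training inputs, invoke the compressibility bound from \Cref{lem: weak sequence norm holds}, and rely on a uniform (RIP-based) $\ell_1$-recovery guarantee so that a single random draw of inputs works for every $(\hat y,\hat\eta)\in T^*Y$ at once. Your exposition is in fact more explicit than the paper's on why the uncountable supremum over phase space causes no trouble, and your exponent bookkeeping (applying the compressibility lemma at $r+\tfrac{1}{2}$ to get error $\lesssim s^{-r}$) is cleaner than the paper's.
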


\begin{proof}
    It will be useful to rewrite the guarantee of the proposition in terms of a matrix representation of the operators $\widehat{R}$ and $R$.
    Observe that the norm bound can be established by proving that
    \begin{equation*}
        \sup_{(\hat y,\hat \eta)\in T^*Y, u\in L^2_\Omega(X),\|u\|_{\Omega}=1}
        \big| \langle h_{\hat y,\hat \eta},  (R-\widehat{R})u \rangle \big| 
        < \epsilon.
    \end{equation*}
    To establish this, we construct an estimate $\widehat{R}^*h_{\hat y,\hat\eta}$ such that
    \begin{equation}\label{eq:epsilon-norm-bound}
        \sup_{(\hat y,\hat\eta)\in T^*Y}
        \sqrt{\sum_{\omega\in\Omega}\langle (R-\widehat{R})^* h_{\hat y,\hat\eta}, g_\omega \rangle^2}
        < \epsilon
    \end{equation}
    as follows.
    For each $j=1,\ldots,J$, put
    \begin{equation*}
        u_j = \sum_{\omega\in\Omega} a_{j\omega}g_\omega,
    \end{equation*}
    where $\{a_{j\omega}\}_{j,\omega}$ are \textit{i.i.d.} normally distributed random variables.
    Then, for any $(\hat y,\hat\eta)\in T^*Y$, consider the corresponding measurements
    \begin{equation*}
        c_j := \langle h_{\hat y,\hat\eta}, Ru_j\rangle = \langle R^*h_{\hat y,\hat\eta}, u_j\rangle.
    \end{equation*}
    Observe that
    \begin{equation*}
        c_j = \sum_{\omega\in\Omega}a_{j\omega}
        \langle R^*h_{\hat y,\hat\eta},g_\omega\rangle.
    \end{equation*}
    Denoting the coefficients $\langle R^*h_{\hat y,\hat\eta},g_\omega\rangle$ by $v_{\omega}$, and gathering the coefficients $\{a_{j\omega}\}_{j,\omega}$  in a matrix $A\in\mathbb{R}^{J\times\Omega}$, this data can be written as $c=Av$, where $v$ is the vector we wish to recover.
    By \cref{lem: weak sequence norm holds}, we have $|v|_{(k)}\leq C_r k^{-r}$, where $C_r>0$ depends on $r$, but not on $(\hat y, \hat \eta)$.
    Because of this, a sparse approximation of $v$ by keeping the $S=\lceil (\epsilon/C_r)^{1/(1/2-r)} \rceil$ largest entries has an $\ell_1$ error at most $\epsilon$.
    By standard results in compressive sensing~\cite{candes2006stable}, $A$ is such that the vector $v$ can be recovered with $\ell_2$ error at most $\epsilon$ via $\ell_1$-minimization when $J=O((\epsilon/C_r)^{1/(1/2-r)}\log|\Omega|)$, with high probability, where the constants obscured by the $O(\cdot)$ notation are independent of $r$.
    Note that this condition holds simultaneously \emph{for all} $(\hat y,\hat\eta)\in T^*Y$.
    This fulfills the condition \eqref{eq:epsilon-norm-bound}, completing the proof.
\end{proof}
We now show how \cref{prop:pre-data-efficiency} implies \cref{theorem:data-efficiency}:
\begin{proof}[Proof of \cref{theorem:data-efficiency}]
Suppose $\Omega=\{(\hat x, \hat \xi)\in\Lambda:|\hat\xi|\leq B\}$ for some $B>0$.
Let $u\in L^2(X)$ be given such that $\|u\|_{L^2(X)}=1$.
Then, $u$ can be written as
\begin{align*}
    u &= \sum_{(\hat x, \hat \xi)\in\Omega}\langle u, \gamma_{\hat x,\hat\xi}\rangle g_{\hat x,\hat\xi}
    +
    \sum_{(\hat x, \hat \xi)\in\Lambda\setminus\Omega}\langle u, \gamma_{\hat x,\hat\xi}\rangle g_{\hat x,\hat\xi} \\
    &:= u_\Omega + u_{\Omega^c},
\end{align*}
where $\|u_\Omega\|_{\Omega}\leq C_\gamma$, where $C_\gamma$ is the upper frame bound of the Gabor system $\{\gamma_{\hat x,\hat \xi}\}_{(\hat x,\hat\xi)\in\Lambda}$.
Based on this decomposition, we extend the estimated operator $\widehat{R}$ from \cref{theorem:data-efficiency} to act on general $u\in L^2(X)$ as
\begin{equation*}
    \widehat{R}u := \widehat{R}u_{\Omega}.
\end{equation*}
We now bound $\|Ru_{\Omega^c}\|$.
Observe that
\begin{equation*}
\|Ru_{\Omega^c}\| \leq \sqrt{\|u_{\Omega^c}\|\cdot \|R^*Ru_{\Omega^c}\|},
\end{equation*} 
and see further that
\begin{equation}\label{eq:psido-bound-1}
    \|R^*Ru_{\Omega^c}\|^2
    \leq \sum_{(\hat x, \hat\xi)\in\Lambda_X\setminus\Omega}
    |\langle u, \gamma_{\hat x,\hat\xi}\rangle|^2 \|R^*Rg_{\hat x,\hat\xi}\|^2.
\end{equation}
By \cite[Theorem~2.2]{mazzucchelli2023} and the Bolker condition, $R^*R$ is a pseudodifferential operator of order $s:=n''-(n_X+N_Y)/2<0$.
Since $g\in C_0^{\infty}(\mathbb{R}^{n_X})$, its Fourier transform has superalgebraic decay away from zero, yielding the following estimate for all $(\hat x,\hat\xi)\in T^*\mathbb{R}^{n}$:
\begin{equation*}
    \|R^*Rg_{\hat x,\hat\xi}\| \leq C_1 (1+\|\hat\xi\|)^s \|g\|
\end{equation*}
for some $C_1>0$.
Since $(\hat x,\hat\xi)\in\Lambda\setminus\Omega$ implies $\|\hat\xi\|>B$, this implies that
\begin{equation}\label{eq:psido-bound-2}
    \|R^*Rg_{\hat x, \hat\xi}\| \leq C_1(1+B)^s
\end{equation}
for all $(\hat x,\hat\xi)\in\Lambda_X\setminus\Omega$.
Substituting \eqref{eq:psido-bound-2} into \eqref{eq:psido-bound-1} yields
\begin{equation}\label{eq:psido-bound-3}
    \|R^*Ru_{\Omega^c}\| \leq C_\gamma^2 C_1(1+B)^s.
\end{equation}
This allows us to finally bound the error when tested against Gabor atoms for general $u\in L^2(X)$.
Taking $\|u\|_{L^2(X)}=1$ without loss of generality, under the conditions of \cref{theorem:data-efficiency}, we have
\begin{align*}
    |\langle h_{\hat y,\hat\eta}, (\widehat{R}-R)u\rangle|
    &=
    |\langle h_{\hat y,\hat\eta}, (\widehat{R}-R)u_{\Omega}-Ru_{\Omega^c}\rangle| \\
    &\leq C_\gamma \epsilon + \|Ru_{\Omega^c}\|_{L^2(Y)} \\
    &\leq C_\gamma \epsilon + C_\gamma^2 C_1 (1+B)^s.
\end{align*}
In other words, the error is $O(\epsilon + (1+B)^s)$.
We choose $B\approx\epsilon^{1/s}$, so that $|\Omega|=O(\epsilon^{n/s})$.
Then, \cref{prop:pre-data-efficiency} yields the desired sampling rate, on account of $s<0$.
\end{proof}

\begin{remark}
    We remark that the constant $C_r$ in \cref{theorem:data-efficiency} is proportional to the constant in the numerator of the decay bound proved in \cref{lem: b bound for integral}.
    Because of this, tighter control of this constant can lead to more accurate sample bounds: for instance, if $C_r=O(r!)$, then \cref{theorem:data-efficiency} implies a $\polylog(1/\epsilon)$ sampling rate.
    To attain such control over $C_r$ requires much stronger assumptions on the operator, 
    such as a (Gevrey-)analytic structure~\cite{cordero2015exponentially}.
\end{remark}

\section{Proof of Lemma~\ref{lemma:gauss-approximation}}

\begin{proof}
    We begin by establishing \cref{lemma:gauss:one}.
    Fix $y\in Y$, so that we may treat $f(y,x)$ as a function only of $x$, which we will write as $f(x)$, abusing notation.
    Let $x\in f^{-1}(0)$ be given arbitrarily; such a point is guaranteed to exist by \cref{assump:nontrivial-levelset}.
    By the constant rank theorem and \cref{assump:levelset-gradient}, there is a neighborhood $U\ni x$ such that $f^{-1}(0) \cap U$ is a submanifold of $X$ with codimension $n''$.
    Since this holds for all $x\in f^{-1}(0)$, that set constitutes a submanifold of $X$ with codimension $n''$.
    Moreover, \eqref{eq:implicit-measure} defines a smooth measure on $Z_f$.
    
    To establish \cref{lemma:gauss:two}, we again fix $y\in Y$ and treat $f(y,x)$ as $f(x)$.
    Assume without loss of generality that $w(y)=1$.
    Since $f^{-1}(0)$ is a submanifold of $X$, for any $x\in f^{-1}(0)$, by the constant rank theorem we may take coordinates $(x',x'')$ locally write $f^{-1}(0)$ as the graph of a function $x''=b(x')$.
    Furthermore, we approximate $f(x',x'')$ near $f^{-1}(0)$ by the first-order term $\nabla_{x''} f(x',b(x'))\cdot(x''-b(x'))$.
    This yields the estimate
    \begin{equation*}
        \left(\frac{\lambda}{\pi}\right)^{n''/2}
        \int e^{-\lambda|f(x',x'')|^2} dx''
        \approx a_f(x',b(x')),
    \end{equation*}
    or, in other words,
    \begin{equation*}
        \left(\frac{\lambda}{\pi}\right)^{n''/2}
        \int e^{-\lambda|f(x',x'')|^2} dx' dx''
        \approx \int_{G_y} a_f(y,x) d\nu_y(x).
    \end{equation*}
    In local coordinates, then, the error can be written as
    \begin{equation}\label{eq:gauss-error-one}
    Lu(y) - L^{\lambda}u(y)
    \approx
    \iint
    \frac{(\lambda/\pi)^{n''/2}e^{-\lambda|f(x',x'')|^2}u(x',x'') - a_f(y,x',b(x'))u(x',b(x'))}
    {\int_{G_y} a_f(y,s) d\nu_y(s)}
    dx''dx'.
    \end{equation}
    For each $x'$, we have the first order approximation
    \begin{align*}
        &\big|
        \int 
        \frac
        {
        \left(\frac{\lambda}{\pi}\right)^{n''/2}
        e^{-\lambda|\nabla_{x''}f(x',b(x'))\cdot(x''-b(x'))|^2}
        }{a(y,(x',b(x')))}
        \left(u(x',x'')-u(x',b(x'))\right)
        dx''
        \big| \\
        &\qquad\qquad\qquad\qquad\leq
        \sup_{u''}|\partial_{x''}u(x',u'')|
        \int 
        \frac
        {
        \left(\frac{\lambda}{\pi}\right)^{n''/2}
        e^{-\lambda|\nabla_{x''}f(x',b(x'))\cdot x''|^2}
        }{a(y,(x',b(x')))}
        \big|x''|
        dx'' \\
        &\qquad\qquad\qquad\qquad\leq
        C_f \lambda^{-n''/2} \sup_{u''}|\partial_{x''}u(x',u'')|
    \end{align*}
    for some $C_f>0$ depending on $f$, but not $u,x',$ or $\lambda$.
    Substituting into \eqref{eq:gauss-error-one} yields the bound
    \begin{equation*}
    |Lu(y) - L^{\lambda}u(y)|
    \lesssim
    \lambda^{-n''/2} \sup_{x''}|\partial_{x''}u(x',x'')|,
    \end{equation*}
    which implies the desired bound.
\end{proof}

\section{Proof of Lemma~\ref{lemma:universal}}

\begin{proof}
    Let $P_Y,P_X$ be the collection of all monomials on $Y,X,$ respectively.
    For positive integers $\ell,k>0$, the quantity
    \begin{equation}\label{eq: N asympt}
        F(\ell, k) := \sum_{j=0}^\ell \binom{k + j- 1}{j} \sim \frac{\ell^{k}}{(k-1)!}
    \end{equation}
    counts the number of possible monomials in $k$ variables of degree less than $\ell$.
    We assume without loss of generality that $m=1$.
    Note that any polynomial of degree $\ell$ on $Y\times X$ can be written in the form $\langle p_Y, Wp_X\rangle$ for $p_Y\in P_Y^{F(\ell,N)}, p_X\in P_X^{F(\ell,n)}, W\in\mathbb{R}^{F(\ell,N)\times F(\ell,n)}$.
    Then, by \cite[Theorem~1]{bagby2002multivariate}, there exists such $p_Y,p_X,W$ with
    \begin{equation*}
        \| f - \langle p_Y,W p_X\rangle\|_{\infty} \leq \frac{C\|f\|_{C^2}}{\ell^2},
    \end{equation*}
    where $C$ depends only on the dimension and diameter of $Y\times X$ in Euclidean space.
    Since $\Psi_Y,\Psi_X$ are dense subsets of $C^1(Y),C^1(X)$, we can choose $\psi_Y\in\Phi_Y^{F(\ell,N)},\psi_X\in\Phi_X^{F(\ell,n)}$ such that
    \begin{equation*}
        \| \langle \psi_Y , \psi_X\rangle - \langle p_Y,W p_X\rangle\|_{\infty} \leq \frac{C\|f\|_{C^2}}{\ell^2},
    \end{equation*}
    on account of $P_Y,P_X$ consisting of smooth functions.
    Choosing $\ell\geq \sqrt{2C\|f\|_{C^2}/\epsilon}$ and $d=F(\ell,\max\{N,n\})$, applying the triangle inequality yields
    \begin{equation*}
        \|f - \langle\psi_Y(y),\psi_X(x)\rangle\|_{\infty}\leq\epsilon,
    \end{equation*}
    with $d=O((\epsilon/\|f\|_{C^2})^{-\max\{N,n\}/2})$.

    Put $\hat{f}(y,x)=\langle\psi_Y(y),\psi_X(x)\rangle$, and choose $\hat{w}(y)$ so that
    \begin{equation}\label{eq:kernel-normalization}
        \frac{\hat{w}(y)}{\int_X e^{-\lambda|\hat{f}(y,x)|^2}d\nu(x)}
        =
        \frac{{w}(y)}{\int_X e^{-\lambda|{f}(y,x)|^2}d\nu(x)}
        \approx \lambda^{n''/2},
    \end{equation}
    where the approximation follows from \cref{assump:levelset-gradient}.
    We may assume without loss of generality that $\nu(X)=1$ (by absorbing constants into the final bound), so that
    \begin{equation*}
        \left|\int_X 
        \left(
        e^{-\lambda|f(y,x)|^2}-e^{-\lambda|\hat{f}(y,x)|^2}
        \right)
        u(x)
        d\nu(x)\right|
        \leq
        \sqrt{\lambda}\epsilon\|u\|_{\infty},
    \end{equation*}
    since $\exp(-x^2)$ has Lipschitz constant bounded by one.
    Incorporating the normalization~\eqref{eq:kernel-normalization} and appealing to the definition of the levelset integral kernel, this concludes the proof.
\end{proof}

\section{Proof of Lemma~\ref{lemma:discrete-kernel}}

\begin{proof}
    Define
    \begin{equation*}
        c_{\lambda,0} = \int_X e^{-\lambda|f(y,x)|^2}d\nu_0(x).
    \end{equation*}
    Observe that
    \begin{align*}
        |c_\lambda-c_{\lambda,0}| &= \big|\int_X e^{-\lambda|f(y,x)|^2}d(\nu-\nu_0)(x)\big| \\
        &\leq \Lip\left(e^{-\lambda|f(y,x)|^2}\right)\cdot W_1(\nu,\nu_0) \\
        &\leq \sqrt{\lambda} K_f\cdot W_1(\nu,\nu_0).
    \end{align*}
    In particular, $|c_{\lambda}-c_{\lambda,0}|\leq c_{\lambda}/2$.
    Next, define
    \begin{align*}
        \alpha &= \int_X e^{-\lambda|f(y,x)|^2}u(x)d\nu(x) \\
        \alpha_0 &= \int_X e^{-\lambda|f(y,x)|^2}u(x)d\nu_0(x).
    \end{align*}
    Observe that $e^{-\lambda|f(y,x)|^2}u(x)$ has $L^{\infty}$-norm bounded by one and Lipschitz constant at most $\sqrt{\lambda} K_f + K_u$.
    This implies that $|\alpha|\leq 1$ and
    \begin{equation*}
        |\alpha-\alpha_0| \leq (\sqrt{\lambda} K_f + K_u)\cdot W_1(\nu,\nu_0).
    \end{equation*}
    Finally, noting that $0<c\leq 1$, we have
    \begin{align*}
        |L^{\lambda}_{\nu}u(y) - L^{\lambda}_{\nu_0}u(y)|
        &= \big|\frac{\alpha}{c_\lambda}-\frac{\alpha_0}{c_{\lambda,0}}\big| \\
        &\leq \frac{|\alpha|\cdot|c_\lambda-c_{\lambda,0}|+|c_\lambda|\cdot|\alpha-\alpha_0|}{c_\lambda^2-|c_\lambda|\cdot|c_\lambda-c_{\lambda,0}|} \\
        &\leq \frac{2\sqrt{\lambda}K_f+K_u}{c_\lambda^2/2}W_1(\nu,\nu_0),
    \end{align*}
    as desired.
\end{proof}

\section{Description of Models and Training}

For all experiments, the maps $\psi_Y,\psi_X$ were parameterized as multilayer perceptrons (MLPs) with sinusoidal activations~\cite{sitzmann2020implicit}, and three hidden layers of width $d=64$, followed by linear layers yielding maps $\psi_Y:Y\to\mathbb{R}^{64\times m}, \psi_X:X\to\mathbb{R}^{64\times m}$, where $m$ varies depending on the experiment.

Given a sampled dataset $\{S_1u_j, S_2Ru_j+\eta_j\}_{j=1}^J$, where $\eta_j$ models additive noise, models were trained using the Adam optimizer~\cite{kingma2015adam,deepmind2020jax} to minimize the MSE defined as
\begin{equation*}
    \mathrm{MSE}(\theta;L,\lambda,\{S_1u_{j_k}, S_2Ru_{j_k}+\eta_{j_k}\}_{k=1}^{K})
    =
    \frac{1}{K}\sum_{k=1}^K \| S_2Ru_{j_k}+\eta_{j_k} - S_2L_\theta^\lambda S_1u_{j_k} \|^2,
\end{equation*}
where the indices $\{j_k\}_k$ correspond to the batch at a given gradient step, $\theta$ gathers the parameters of the neural networks, and $L_{\theta}^\lambda S_1u_{j_k}$ is understood to operate according to the discretization scheme described in the paper.

In all experimental runs, for a given training dataset, we reserve $20\%$ of the data as a validation set, to choose the model that performs best.
The remaining $80\%$ are used to train $5$ models with different random initializations each for $10000$ gradient steps, and the one that performs best on the validation set is trained for $40000$ additional gradient steps.
The number of training steps used was double for the lensing ray transform, in order to attain more accurate models with which to estimate the wavespeed and subsequent geodesics.

The set of model parameters with the best performance on the validation set over the whole training run is kept.
A batch size of $K=32$ was used for the experiments learning the Radon transform in $\mathbb{R}^2$ and the Euclidean Ray transform in $\mathbb{R}^3$, and a batch size of $K=256$ for the lensing ray transform example.

\bibliographystyle{abbrv}
\bibliography{strings,ref}

\begin{thebibliography}{10}

\bibitem{adcock2020deep}
B.~Adcock, S.~Brugiapaglia, N.~Dexter, and S.~Morage.
\newblock Deep neural networks are effective at learning high-dimensional
  {Hilbert}-valued functions from limited data.
\newblock In {\em Proceedings of the 2nd Mathematical and Scientific Machine
  Learning Conference}, volume 145 of {\em Proceedings of Machine Learning
  Research}, pages 1--36. PMLR, Aug. 2022.

\bibitem{bagby2002multivariate}
T.~Bagby, L.~Bos, and N.~Levenberg.
\newblock Multivariate simultaneous approximation.
\newblock {\em Constructive Approximation}, 18(4):569--577, 2002.

\bibitem{bahdanau2014neural}
D.~Bahdanau, K.~Cho, and Y.~Bengio.
\newblock Neural machine translation by jointly learning to align and
  translate.
\newblock In {\em International Conference on Learning Representations}, 2015.

\bibitem{baraniuk2007compressive}
R.~G. Baraniuk.
\newblock Compressive sensing [lecture notes].
\newblock {\em IEEE Signal Processing Magazine}, 24(4):118--121, 2007.

\bibitem{bhattacharya2020model}
K.~Bhattacharya, B.~Hosseini, N.~B. Kovachki, and A.~M. Stuart.
\newblock Model reduction and neural networks for parametric {PDEs}.
\newblock {\em SIAM Journal of Computational Mathematics}, 7:121--157, 2021.

\bibitem{boulle2022data}
N.~Boull{\'e}, C.~J. Earls, and A.~Townsend.
\newblock Data-driven discovery of {Green’s} functions with
  human-understandable deep learning.
\newblock {\em Scientific Reports}, 12(1), 2022.

\bibitem{boulle2024operator}
N.~Boull{\'e}, D.~Halikias, S.~E. Otto, and A.~Townsend.
\newblock Operator learning without the adjoint.
\newblock {\em Proceedings of the National Academy of Sciences}, 25(364):1--54,
  2024.

\bibitem{boulle2023}
N.~Boull{\'e}, D.~Halikias, and A.~Townsend.
\newblock Elliptic {PDE} learning is provably data-efficient.
\newblock {\em Proceedings of the National Academy of Sciences}, 120(39), 2023.

\bibitem{boulle2022learning}
N.~Boull{\'e}, S.~Kim, T.~Shi, and A.~Townsend.
\newblock Learning {Green's} functions associated with time-dependent partial
  differential equations.
\newblock {\em Journal of Machine Learning Research}, 23(218):1--34, 2022.

\bibitem{boulle2023learning}
N.~Boull{\'e} and A.~Townsend.
\newblock Learning elliptic partial differential equations with randomized
  linear algebra.
\newblock {\em Foundations of Computational Mathematics}, 23(2):709--739, 2023.

\bibitem{boulle2024mathematical}
N.~Boull{\'e} and A.~Townsend.
\newblock A mathematical guide to operator learning.
\newblock In {\em Handbook of Numerical Analysis}, volume~25, pages 83--125.
  Elsevier, 2024.

\bibitem{bubba2020deep}
T.~A. Bubba, M.~Galinier, M.~Lassas, M.~Prato, L.~Ratti, and S.~Siltanen.
\newblock Deep neural networks for inverse problems with pseudodifferential
  operators: An application to limited-angle tomography.
\newblock {\em SIAM Journal on Imaging Sciences}, 14(2):470--505, 2021.

\bibitem{candes2006stable}
E.~J. Cand{\`e}s, J.~K. Romberg, and T.~Tao.
\newblock Stable signal recovery from incomplete and inaccurate measurements.
\newblock {\em Communications on Pure and Applied Mathematics},
  59(8):1207--1223, 2006.

\bibitem{chen2023deep}
K.~Chen, C.~Wang, and H.~Yang.
\newblock Deep operator learning lessens the curse of dimensionality for
  {PDEs}.
\newblock {\em Transactions on Machine Learning Research}, 2023.

\bibitem{cordero2007time}
E.~Cordero, F.~Nicola, and L.~Rodino.
\newblock Time-frequency analysis of {Fourier} integral operators.
\newblock {\em Communications on Pure and Applied Mathematics}, 9(1), 2009.

\bibitem{cordero2015exponentially}
E.~Cordero, F.~Nicola, and L.~Rodino.
\newblock Exponentially sparse representations of {Fourier} integral operators.
\newblock {\em Revista matem{\'a}tica iberoamericana}, 31(2):461--476, 2015.

\bibitem{cuomo2022scientific}
S.~Cuomo, V.~S. Di~Cola, F.~Giampaolo, G.~Rozza, M.~Raissi, and F.~Piccialli.
\newblock Scientific machine learning through physics-informed neural networks:
  where we are and what's next.
\newblock {\em Journal of Scientific Computing}, 92(3), 2022.

\bibitem{de2023convergence}
M.~V. {de~Hoop}, N.~B. Kovachki, N.~H. Nelsen, and A.~M. Stuart.
\newblock Convergence rates for learning linear operators from noisy data.
\newblock {\em SIAM/ASA Journal on Uncertainty Quantification}, 11(2):480--513,
  2023.

\bibitem{de2019deep}
M.~V. {de~Hoop}, M.~Lassas, and C.~A. Wong.
\newblock Deep learning architectures for nonlinear operator functions and
  nonlinear inverse problems.
\newblock {\em Mathematical Statistics and Learning}, 4(1):1--86, 2022.

\bibitem{de2009seismic}
M.~V. de~Hoop, H.~Smith, G.~Uhlmann, and R.~Van~der Hilst.
\newblock Seismic imaging with the generalized {Radon} transform: a curvelet
  transform perspective.
\newblock {\em Inverse Problems}, 25(2), 2009.

\bibitem{debarnot2024ice}
V.~Debarnot, V.~Kishore, R.~D. Righetto, and I.~Dokmanic.
\newblock {Ice-Tide}: Implicit {Cryo-ET} imaging and deformation estimation.
\newblock {\em IEEE Transactions on Computational Imaging}, 2024.

\bibitem{deepmind2020jax}
DeepMind, I.~Babuschkin, K.~Baumli, A.~Bell, S.~Bhupatiraju, J.~Bruce,
  P.~Buchlovsky, D.~Budden, T.~Cai, A.~Clark, I.~Danihelka, A.~Dedieu,
  C.~Fantacci, J.~Godwin, C.~Jones, R.~Hemsley, T.~Hennigan, M.~Hessel, S.~Hou,
  S.~Kapturowski, T.~Keck, I.~Kemaev, M.~King, M.~Kunesch, L.~Martens,
  H.~Merzic, V.~Mikulik, T.~Norman, G.~Papamakarios, J.~Quan, R.~Ring, F.~Ruiz,
  A.~Sanchez, L.~Sartran, R.~Schneider, E.~Sezener, S.~Spencer, S.~Srinivasan,
  M.~Stanojevi\'{c}, W.~Stokowiec, L.~Wang, G.~Zhou, and F.~Viola.
\newblock The {DeepMind} {JAX} ecosystem, 2020.

\bibitem{E2018deepRitz}
W.~E and B.~Yu.
\newblock The deep {Ritz} method: a deep learning-based numerical algorithm for
  solving variational problems.
\newblock {\em Communications in Mathematics and Statistics}, 6(1):1--12, 2018.

\bibitem{fan2020solving}
Y.~Fan and L.~Ying.
\newblock Solving electrical impedance tomography with deep learning.
\newblock {\em Journal of Computational Physics}, 404, 2020.

\bibitem{FGGN}
R.~Felea, R.~Gaburro, A.~Greenleaf, and C.~Nolan.
\newblock Microlocal analysis of borehole seismic data.
\newblock {\em Inverse Problems in Imaging}, 16(6):1543--1570, 2022.

\bibitem{gin2021deepgreen}
C.~R. Gin, D.~E. Shea, S.~L. Brunton, and J.~N. Kutz.
\newblock {DeepGreen}: Deep learning of {Green's} functions for nonlinear
  boundary value problems.
\newblock {\em Scientific Reports}, 11(1):1--14, 2021.

\bibitem{gropp2020implicit}
A.~Gropp, L.~Yariv, N.~Haim, M.~Atzmon, and Y.~Lipman.
\newblock Implicit geometric regularization for learning shapes.
\newblock In {\em International Conference on Machine Learning}, pages
  3789--3799. PMLR, 2020.

\bibitem{Guillemin1985}
V.~Guillemin.
\newblock On some results of {Gelfand} in integral geometry.
\newblock In {\em Pseudodifferential operators and applications}, pages
  149--155, 1985.

\bibitem{GuilleminSternberg}
V.~Guillemin and S.~Sternberg.
\newblock {\em Geometric asymptotics}.
\newblock Mathematical Surveys. American Mathematical Society, Providence,
  R.I., 1977.

\bibitem{HolmanUhlmann}
S.~Holman and G.~Uhlmann.
\newblock On the microlocal analysis of the geodesic {X-ray} transform with
  conjugate points.
\newblock {\em Journal of Differential Geometry}, 108(3):459--494, 2018.

\bibitem{Hormanderacta}
L.~H{\"o}rmander.
\newblock Fourier integral operators.
\newblock {\em Acta Mathematica}, 127(1):79–183, 1971.

\bibitem{janssen1994duality}
A.~J. Janssen.
\newblock Duality and biorthogonality for {Weyl-Heisenberg} frames.
\newblock {\em Journal of Fourier Analysis and Applications}, 1(4):403--436,
  1994.

\bibitem{jokela2025}
N.~Jokela, T.~Liimatainen, M.~Sarkkinen, and L.~Tzou.
\newblock Bulk metric reconstruction from entanglement data via minimal surface
  area variations.
\newblock {\em Journal of High Energy Physics}, 2025, 10 2025.

\bibitem{karniadakis2021physics}
G.~E. Karniadakis, I.~G. Kevrekidis, L.~Lu, P.~Perdikaris, S.~Wang, and
  L.~Yang.
\newblock Physics-informed machine learning.
\newblock {\em Nature Reviews Physics}, 3(6):422--440, 2021.

\bibitem{Karthik2021physics}
K.~Kashinath, M.~Mustafa, J.-L.~W. A.~Albert, C.~Jiang, K.~A. S.~Esmaeilzadeh,
  R.~Wang, A.~Chattopadhyay, A.~Singh, A.~Manepalli, D.~Chirila, R.~Yu,
  R.~Walters, B.~White, H.~Xiao, H.~A. Tchelepi, P.~Marcus, A.~Anandkumar, and
  P.~Hassanzadeh.
\newblock Physics-informed machine learning: case studies for weather and
  climate modelling.
\newblock {\em Journal of Philosophical Transactions of the Royal Society A},
  2020.

\bibitem{khorashadizadeh2024glimpse}
A.~Khorashadizadeh, V.~Debarnot, T.~Liu, and I.~Dokmani{\'c}.
\newblock {GLIMPSE}: Generalized locality for scalable and robust {CT}.
\newblock {\em IEEE Transactions on Medical Imaging}, 44(11), Nov. 2025.

\bibitem{kingma2015adam}
D.~P. Kingma and J.~Ba.
\newblock Adam: A method for stochastic optimization.
\newblock In {\em International Conference on Learning Representations}, 2015.

\bibitem{kothari2020learning}
K.~Kothari, M.~de~Hoop, and I.~Dokmani{\'c}.
\newblock Learning the geometry of wave-based imaging.
\newblock {\em Advances in Neural Information Processing Systems},
  33:8318--8329, 2020.

\bibitem{kovachki2021universal}
N.~B. Kovachki, S.~Lanthaler, and S.~Mishra.
\newblock On universal approximation and error bounds for {Fourier} neural
  operators.
\newblock {\em Journal of Machine Learning Research}, 22(290):1--76, 2021.

\bibitem{kunstmann2023seismic}
P.~C. Kunstmann, E.~T. Quinto, and A.~Rieder.
\newblock Seismic imaging with generalized {Radon} transforms: stability of the
  {Bolker} condition.
\newblock {\em Pure and Applied Mathematics Quarterly}, 19(4):1985--2036, 2023.

\bibitem{lanthaler2022error}
S.~Lanthaler, S.~Mishra, and G.~E. Karniadakis.
\newblock Error estimates for {DeepONets}: A deep learning framework in
  infinite dimensions.
\newblock {\em Transactions of Mathematics and Its Applications}, 6(1), 2022.

\bibitem{lecun2015deep}
Y.~LeCun, Y.~Bengio, and G.~Hinton.
\newblock Deep learning.
\newblock {\em Nature}, 521(7553):436--444, 2015.

\bibitem{li2023fourier}
Z.~Li, D.~Z. Huang, B.~Liu, and A.~Anandkumar.
\newblock Fourier neural operator with learned deformations for {PDEs} on
  general geometries.
\newblock {\em Journal of Machine Learning Research}, 24(388):1--26, 2023.

\bibitem{li2020fno}
Z.~Li, N.~Kovachki, K.~Azizzadenesheli, B.~Liu, K.~Bhattacharya, A.~Stuart, and
  A.~Anandkumar.
\newblock Fourier neural operator for parametric partial differential
  equations, 2020.

\bibitem{long2019pde}
Z.~Long, Y.~Lu, and B.~Dong.
\newblock {PDE-Net} 2.0: Learning {PDEs} from data with a numeric-symbolic
  hybrid deep network.
\newblock {\em Journal of Computational Physics}, 2019.

\bibitem{lu2021learning}
L.~Lu, P.~Jin, G.~Pang, Z.~Zhang, and G.~E. Karniadakis.
\newblock Learning nonlinear operators via {DeepONet} based on the universal
  approximation theorem of operators.
\newblock {\em Nature Machine Intelligence}, 3(3):218--229, 2021.

\bibitem{luxburg2004}
U.~v. Luxburg and O.~Bousquet.
\newblock Distance-based classification with {Lipschitz} functions.
\newblock {\em Journal of Machine Learning Research}, 5:669--695, June 2004.

\bibitem{mazzucchelli2023}
M.~Mazzucchelli, M.~Salo, and L.~Tzou.
\newblock A general support theorem for analytic double fibration transforms.
\newblock {\em arXiv:2306.05906}, 2023.

\bibitem{michalkiewicz2019deep}
M.~Michalkiewicz, J.~K. Pontes, D.~Jack, M.~Baktashmotlagh, and A.~Eriksson.
\newblock Deep level sets: Implicit surface representations for 3d shape
  inference.
\newblock {\em arXiv:1901.06802}, 2019.

\bibitem{mildenhall2021nerf}
B.~Mildenhall, P.~P. Srinivasan, M.~Tancik, J.~T. Barron, R.~Ramamoorthi, and
  R.~Ng.
\newblock Nerf: Representing scenes as neural radiance fields for view
  synthesis.
\newblock {\em Communications of the ACM}, 65(1):99--106, 2021.

\bibitem{monard2014}
F.~Monard.
\newblock Numerical implementation of geodesic {X}-ray transforms and their
  inversion.
\newblock {\em SIAM Journal on Imaging Sciences}, 7(2):1335--1357, 2014.

\bibitem{osher2001level}
S.~Osher and R.~P. Fedkiw.
\newblock Level set methods: an overview and some recent results.
\newblock {\em Journal of Computational Physics}, 169(2):463--502, 2001.

\bibitem{park2021nerfies}
K.~Park, U.~Sinha, J.~T. Barron, S.~Bouaziz, D.~B. Goldman, S.~M. Seitz, and
  R.~Martin-Brualla.
\newblock Nerfies: Deformable neural radiance fields.
\newblock In {\em IEEE/CVF International Conference on Computer Vision}, pages
  5865--5874, 2021.

\bibitem{pathak2022fourcastnet}
J.~Pathak, S.~Subramanian, P.~Harrington, S.~Raja, A.~Chattopadhyay,
  M.~Mardani, T.~Kurth, D.~Hall, Z.~Li, K.~Azizzadenesheli, et~al.
\newblock Fourcastnet: A global data-driven high-resolution weather model using
  adaptive {Fourier} neural operators.
\newblock {\em arXiv:2202.11214}, 2022.

\bibitem{quinto2008local}
E.~T. Quinto and O.~{\"O}ktem.
\newblock Local tomography in electron microscopy.
\newblock {\em SIAM Journal on Applied Mathematics}, 68(5):1282--1303, 2008.

\bibitem{Raissi2019PINN}
M.~Raissi, P.~Perdikaris, and G.~E. Karniadakis.
\newblock Physics-informed neural networks: a deep learning framework for
  solving forward and inverse problems involving nonlinear partial differential
  equations.
\newblock {\em Journal of Computational Physics}, 378:686--707, 2019.

\bibitem{raissi2020hidden}
M.~Raissi, A.~Yazdani, and G.~E. Karniadakis.
\newblock Hidden fluid mechanics: Learning velocity and pressure fields from
  flow visualizations.
\newblock {\em Science}, 367(6481):1026--1030, 2020.

\bibitem{rao2022discovering}
C.~Rao, P.~Ren, Y.~Liu, and H.~Sun.
\newblock Discovering nonlinear {PDEs} from scarce data with physics-encoded
  learning.
\newblock In {\em International Conference on Learning Representations}, 2022.

\bibitem{prabhat_nature_2019}
M.~Reichstein, G.~Camps-Valls, B.~Stevens, M.~Jung, J.~Denzler, N.~Carvalhais,
  and Prabhat.
\newblock Deep learning and process understanding for data-driven earth system
  science.
\newblock {\em Nature}, 566(7743):195--204, 2019.

\bibitem{roddenberry2024}
T.~M. Roddenberry, V.~Saragadam, M.~V. de~Hoop, and R.~Baraniuk.
\newblock Implicit neural representations and the algebra of complex wavelets.
\newblock In {\em International Conference on Learning Representations}, May
  2024.

\bibitem{saragadam2023}
V.~Saragadam, D.~LeJeune, J.~Tan, G.~Balakrishnan, A.~Veeraraghavan, and R.~G.
  Baraniuk.
\newblock {WIRE}: Wavelet implicit neural representations.
\newblock In {\em IEEE/CVF International Conference on Computer Vision}, pages
  18507--18516, June 2023.

\bibitem{schaeffer2017learning}
H.~Schaeffer.
\newblock Learning partial differential equations via data discovery and sparse
  optimization.
\newblock {\em Proceedings of the Royal Society A: Mathematical, Physical and
  Engineering Sciences}, 473(2197), 2017.

\bibitem{Sirignano2018DGM}
J.~Sirignano and K.~Spiliopoulos.
\newblock {DGM}: A deep learning algorithm for solving partial differential
  equations.
\newblock {\em Journal of Computational Physics}, 375:1339--1364, 2018.

\bibitem{sitzmann2020implicit}
V.~Sitzmann, J.~Martel, A.~Bergman, D.~Lindell, and G.~Wetzstein.
\newblock Implicit neural representations with periodic activation functions.
\newblock {\em Advances in Neural Information Processing Systems},
  33:7462--7473, 2020.

\bibitem{StefanovUhlmann_SAR}
P.~Stefanov and G.~Uhlmann.
\newblock Is a curved flight path in {SAR} better than a straight one?
\newblock {\em SIAM Journal on Applied Mathematics}, 73(4):1596--1612, 2013.

\bibitem{deHoopStolk}
C.~C. Stolk and M.~V. de~Hoop.
\newblock Microlocal analysis of seismic inverse scattering in anisotropic
  elastic media.
\newblock {\em Communications on Pure and Applied Mathematics}, 55(3):261--301,
  2002.

\bibitem{tabaghi2019learning}
P.~Tabaghi, M.~V. {de~Hoop}, and I.~Dokmani{\'c}.
\newblock Learning {Schatten-von~Neumann} operators.
\newblock {\em arXiv:1901.10076}, 2019.

\bibitem{Vaswani2017AttentionIA}
A.~Vaswani, N.~M. Shazeer, N.~Parmar, J.~Uszkoreit, L.~Jones, A.~N. Gomez,
  L.~Kaiser, and I.~Polosukhin.
\newblock Attention is all you need.
\newblock In {\em Advances in Neural Information Processing Systems}, 2017.

\bibitem{Wang2021DyAd}
R.~Wang, R.~Walters, and R.~Yu.
\newblock Bridging physics-based and data-driven modeling for learning
  dynamical systems.
\newblock In {\em Learning for Dynamics and Control}, 2021.

\bibitem{wang2021learning}
S.~Wang, H.~Wang, and P.~Perdikaris.
\newblock Learning the solution operator of parametric partial differential
  equations with physics-informed {DeepONets}.
\newblock {\em Science Advances}, 7(40), 2021.

\bibitem{yu2024learning}
R.~Yu and R.~Wang.
\newblock Learning dynamical systems from data: An introduction to
  physics-guided deep learning.
\newblock {\em Proceedings of the National Academy of Sciences}, 121(27), 2024.

\bibitem{zappala2024}
E.~Zappala, A.~H. d.~O. Fonseca, J.~O. Caro, A.~H. Moberly, M.~J. Higley,
  J.~Cardin, and D.~v. Dijk.
\newblock Learning integral operators via neural integral equations.
\newblock {\em Nature Machine Intelligence}, 6(9):1046--1062, 2024.

\end{thebibliography}



\end{document}